\newcommand{\Alg}[1]{{\fontfamily{cmtt}\selectfont#1}}
\newtheorem{theorem}{Theorem}
\newtheorem{problem}[theorem]{Problem}
\title{\LARGE \bf
% Multi-Agent Motion Planning with Kinodynamic Constraints using Conflict Based Search
% Conflict Based Search for Multi-Robot Kinodynamic Motion Planning
Conflict-based Search for Multi-Robot Motion Planning with Kinodynamic Constraints
}
\author{Justin Kottinger$^{1}$, Shaull Almagor$^{2}$, and Morteza Lahijanian$^{1}$% <-this % stops a space
% \thanks{*This work was not supported by any organization}% <-this % stops a space
\thanks{$^{1}$Aerospace Engineering Sciences, University of Colorado Boulder, USA
        {\tt\small \{firstname.lastname\}@colorado.edu}}%
\thanks{$^{2}$The Henry and Marilyn Taub Faculty of Computer Science, Technion, Israel
        {\tt\small shaull@cs.technion.ac.il}%
}
}
\begin{document}
\AddToShipoutPictureBG*{%
  \AtPageUpperLeft{%
    \hspace{16.5cm}%
    \raisebox{-1.5cm}{%
      \makebox[0pt][r]{To appear in the International Conference on Intelligent Robots and Systems (IROS), October 2022.}}}}

\maketitle
\thispagestyle{empty}
\pagestyle{empty}

%%%%%%%%%%%%%%%%%%%%%%%%%%%%%%%%%%%%%%%%%%%%%%%%%%%%%%%%%%%%%%%%%%%%%%%%%%%%%%%%
\begin{abstract}
\emph{Multi-robot motion planning} (MRMP) is the fundamental problem of finding non-colliding trajectories for multiple robots acting in an environment, under kinodynamic constraints.
Due to its complexity, existing algorithms either utilize simplifying assumptions or are incomplete.
This work introduces \emph{kinodynamic conflict-based search} (\Alg{K-CBS}), a decentralized (decoupled) MRMP algorithm that is general, scalable, and probabilistically complete. The algorithm takes inspiration from successful solutions to the discrete analogue of MRMP over finite graphs, known as \emph{multi-agent path finding} (MAPF).
Specifically, we adapt ideas from \emph{conflict-based search} (CBS)--a popular decentralized MAPF algorithm--to the MRMP setting.  
The novelty in this adaptation is that we work directly in the continuous domain, without the need for discretization. In particular, the kinodynamic constraints are treated natively.
\Alg{K-CBS} plans for each robot individually using a low-level planner and and grows a conflict tree to resolve collisions between robots by defining constraints for individual robots.  The low-level planner can be any sampling-based, tree-search algorithm for kinodynamic robots, thus lifting existing planners for single robots to the multi-robot settings.   
We show that \Alg{K-CBS} inherits the (probabilistic) completeness of the low-level planner.  We illustrate the generality and performance of \Alg{K-CBS} in several case studies and benchmarks.

%%%%%%%%%
% This work considers the fundamental problem of \emph{multi-robot motion planning} (MRMP) with kinodynamic constraints, 
% % which is known to be a major challenge in robotics.   
% and introduces \emph{kinodynamic conflict-based search} (\Alg{K-CBS}), a decentraliezed (decoupled) algorithm that is both scalable and probabilistically complete.  The algorithm takes inspirations from successful solutions to the discrete analogue of MRMP over finite graphs, known as \emph{multi-agent path finding} (MAPF).
% % , which is an abstraction of MRMP to finite graphs.  
% Specifically, the main ideas are from \emph{conflict-based search} (CBS), which is a popular decentralized MAPF algorithm.  The novelty is in the incorporation of the ideas that make CBS successful into the continuous domain in a natural way (without the need for discretization) and treating the kinodynamic constraints natively.  \Alg{K-CBS} plans for each robot individually using a low-level planner and and grows a conflict tree to resolve collisions between robots by defining constraints for individual robots.  The low-level planner can be any sampling-based, tree-search algorithm for kinodaynamic robots, lifting existing planners for single robots to the multi-robot settings.   
% We show that \Alg{K-CBS} inherits the (probabilistic) completeness of the low-lever planner.  We illustrate the generality and performance of \Alg{K-CBS} in several case studies and benchmarks.

\end{abstract}
%%%%%%%%%%%%%%%%%%%%%%%%%%%%%%%%%%%%%%%%%%%%%%%%%%%%%%%%%%%%%%%%%%%%%%%%%%%%%%%%
\section{Introduction}
\label{sec:intro}

% \shtodo{Why did we change from Multi-agent to Multi-robot?}
% \ml{both MAMP and MRMP are apparently used in the community. We decided to go with MRMP since robots are known to have dynamical constraints but agents may not.  Not a strong preference though.}
% \shtodo{Ok. It clashes with the song :) But it's fine to leave it as robots.}

\textit{Multi-robot motion planning} (MRMP) is a fundamental challenge in robotics and artificial intelligence. The goal of MRMP is to compute dynamically-feasible trajectories for multiple robots to reach their respective goal regions such that, when executed simultaneously, the robots do not collide with obstacles nor with each other. MRMP has vast applications, from  warehouse robotics and delivery to planetary exploration and search-and-rescue.  
% of MRMP are found in numerous settings where multiple moving agents operate in a shared workspace.
The difficulty of MRMP lies in the size of the planning space, which grows exponentially with the number of robots.  In addition, there are constraints posed by the robots kinodynamics that need to be respected, which by itself is a difficult problem even for a single robot -- \textbf{NP-hard} for simple cases of a point robot with Newtonian dynamics \cite{DXPR93nphard} and curvature-constrained planar Dubins car (linear dynamics) \cite{kirkpatrick2011hardness}, and for nonlinear robots, the complexity is unknown (likely undecidable).
% \shtodo{This paper claims it's undecidable: https://www.sciencedirect.com/science/article/pii/030439759400147B 
% \\ I found this reference from a paper by Plaku.}
% \ml{I think that paper claims undecidability for hybrid systems.  I'm not sure if it holds for just nonlinear systems. we could cite it because we can consider/handle hybrid dynamics as well.}
% \shtodo{I see. I didn't realize this was open. Seems like an interesting problem.}
% and in general it is \textbf{PSPACE-hard} [64], meaning it is at least as hard as solving any \textbf{NP-complete} problem.
% There are two general approaches to MRMP: \emph{centralized} (coupled) and \emph{decentralized} (decoupled). The centralized methods \cite{solovey2015finding,wagner2015subdimensional,shome2020drrt} operate over the composed space of all the agents, and \emph{decentralized} approaches, divide the problem into several sub-problems and solve each separately \cite{gammell2014bit,sanchez2002using,van2005prioritized,tang2018complete}. 
% Centrazlized approaches have strong completeness and optimality properties, but they are not scalable.  Decetnralized approaches are scalable, but often do not provide any performance guarantees.  
For this reason, either simplifying assumptions are posed on the problem (e.g., removal of the dynamic constraints) or incomplete algorithms are accepted for the purposes of scalability.
In this work, without any simplifying assumptions, we aim to develop a scalable kinodynamic planner for MRMP with (probabilistic) completeness guarantees.

MRMP has been extensively studied in the robotics community.  
The approaches are generally classified into two categories: \emph{centralized} (coupled) and \emph{decentralized} (decoupled). 
% \cite{lavalle2006planning}\shtodo{Remove thisreference?}. 
The centralized methods (e.g., \cite{wagner2015subdimensional,shome2020drrt})
% \shtodo{Remove \cite{solovey2015finding}) 
operate over the composed state space of all the robots.   
The advantage of these approaches is that they allow the use of the rich library of existing single-robot motion planners (e.g., RRT \cite{Lavalle98rapidly-exploringrandom}
% , EST \cite{hsu1997path}, 
and KPIECE \cite{sucan2011sampling}) and automatically inherit their completeness and optimality.
%properties.  
The disadvantage is that they are not scalable.  
In contrast, 
%To address this issue, 
decentralized approaches divide the problem into several sub-problems and solve each separately \cite{gammell2014bit,sanchez2002using,van2005prioritized,tang2018complete}.
% Decentralized approaches, such as \ml{explain and cite P-RRT} are scalable, but often do not provide any performance guarantees.
While successful in scalability, 
% The major problem with the 
existing decentralized MRMP algorithms are mostly incomplete, i.e., do not guarantee to find a solution if one exists.

The discrete analogue of MRMP, dubbed \emph{multi-agent path finding} (MAPF), has received much attention in the AI and discrete planning community. There, the agents travel along edges in a discrete graph (rather than a continuous space).
%In the AI and discrete planning community, a similar problem to MRMP is considered over discrete graphs (rather than continuous space), known as the \emph{multi-agent path finding} (MAPF) problem.  
The main focus in MAPF is scalability, and as a result,  advanced algorithms have been developed that are not only scalable, but also have completeness and optimality guarantees (see \cite{stern2019mapf} and references therein). %\cite{stern2019mapf,standley2010finding,AFeln17c,AFeln16c,JSvan18b,SKoen18k,SKoen19a}. 
%\shtodo{I suggest replacing all of these by: (see \cite{stern2019mapf} and references therein).} 
A popular decentralized algorithm is \emph{conflict-based search} (CBS) \cite{SHARON201540}, which plans for each agent separately using low-level search (e.g., $A^*$). It then identifies collisions in the proposed plans and re-plans for colliding agents after placing path-constraints that resolve the collision. Thus, CBS grows a ``conflict tree'', where each node represents a (possibly colliding) plan.
%It uses a low-level search (e.g., $A^*$) to plan for a single agent at each iteration, and grows a constraint tree at the high level to identify and place constraints on the computed paths to avoid collision with other agents.  
%\shtodo{I think that for readers not familiar with MAPF/CBS, this will not be helpful. I can revise, if you agree.}
%\ml{yes, thanks!}

Unfortunately, such MAPF algorithms cannot be utilized to solve the MRMP problem in a straightforward manner, since MAPF abstracts the continuous workspace into a finite graph (e.g., a grid world),
and ignores the shape of the robots.  It also bypasses all dynamic constraints by assuming every edge of the graph is realizable with a known time duration. 
% but it in addition, to discrete space, the kinodynamical constraints are not possible to include.\\

Several works have focused on adapting MAPF algorithms to handle the challenges of MRMP, e.g.,
%addressing those challenges to be able to adopt MAPF algorithms for MRMP, e.g., 
\cite{solis2021representation, honig2018trajectory, Le:ICAPS:2017, Le:RAL:2019, WEN2022103997}.  
Work \cite{solis2021representation} combines continuous \emph{probabilistic roadmaps} (PRM) planner \cite{kavraki1996probabilistic}
with CBS to solve the robotics version of MAPF by ignoring the kinodynamic constraints.  Work \cite{honig2018trajectory}
accounts for dynamics and
uses CBS with optimization techniques for a swarm of quadrotors.  
Specifically, it first performs a discretization of the workspace (grid), and uses CBS to compute discrete paths.  Then, it uses an optimization formulation to generate controllers for the quadrotors to follow the discrete paths.  To guarantee correctness, it assumes sphere-shaped quadrotors with linear dynamics.
% \shtodo{multi-rotors? it appears twice in the paragraph above.}
% \ml{yes. fixed!}
% \shtodo{What? Shouldn't it be multi-robots?}
% \ml{no, it's multi-rotors (e.g., quad-copter)}
% \shtodo{But the ``multi'' is about the number of robots, no?. That is, they have multiple quad-copters, they don't plan for each rotor as a separate agent, do they?}
% \ml{I modified the text to clarify it.}
While performing well for specific robots, those approaches do not generalize to robots with nonlinear dynamics.

% \ml{Justin, add CL-CBS to the above paragraph.  Any other work?}
% \jk{Car Like CBS (CL-CBS) does not belong in this paragraph as they reduce their setting to an MAPF problem. I did add its citation to the MAPF paragraph. I do not know of any work that would fit into this paragraph.}

To deal with general nonlinear dynamics, a powerful approach is sampling-based, tree-based motion planning.  To this end, recent work 
\cite{Le:ICAPS:2017,Le:RAL:2019} combine discrete search with a sampling-based tree planner to solve the MRMP problem.  The algorithm first discretizes the composed configuration space of all the robots using PRM, and then uses a discrete planner to find a candidate high-level plan (guide). 
Then, a sampling-based motion tree is grown in the composed space to follow the sequence of regions in the guide. If unsuccessful, a different guide is computed. This process repeats until a valid trajectory is found.  
The work uses centralized discrete planner at the high-level and grows the motion tree in the composed configuration space. Hence, it is probabilistically complete (i.e., the probability of finding a solution tends to one as the planning time tends to infinity, if a solution exists), but it may be slow. To speed up the planner, a feedback PID controller is assumed to drive each robot from one discrete region in the guide to the next.  While fast planning times can be achieved using this method, the choice for the size of PRM (discretization) is unclear, and the assumption on the controller is limiting, as they do not always exist.  

In this work, we introduce a decentralized, scalable MRMP planner that treats the kinodynamic constraints of the robots natively with probabilistic completeness guarantees.  Unlike related work that directly adopt MAPF algorithms on a discretized version of the problem, we incorporate the ideas that make CBS successful into the continuous domain using a sampling-based method. 
Our algorithm, dubbed \emph{Kinodynamic CBS} (\Alg{K-CBS}), like CBS, uses a low-level search and maintains a conflict tree.  The low-level search can be any (kinodynamic) sampling-based tree planner (e.g., RRT and KPIECE).  In each iteration of the algorithm, a plan is computed for an individual robot given a set of constraints (obstacles).  Then, collisions between the robot trajectories is checked.  If one exists, time-dependent obstacles are defined as constraints in the constraint tree, and a new planning query is specified accordingly.  
To ensure probabilistic completeness, we introduce a \emph{merge} method, by which we merge robots whose plans often conflict, into a single meta-robot.
% which enables to identify the robots with entangled paths and treat them as a meta agent.
% As results, K-CBS becomes probabilistically complete if of the low-level planner is probabilistically complete.

% \ml{Need to see if the results support the following claim:}
% Inspired by recent variants of CBS \cite{boyarski2015icbs,felner2018adding,li2019disjoint,li2019improved}, we also present a \emph{bypass} heuristic, which reduces the size of the constraint tree by allowing the low-level planner to incorporate new constraints within the same planning instance, leading to further performance improvement.

% \shtodo{I think the paragraph below should come before the ``in this work'' above.}
% \ml{the paragraph above explains what this paper does... the paragraph below is supposed to be very explicit about the contributions. Normally it should be short, but in this paper, I thought it was important to elaborate on the contributions to make sure the reviewers don't discount the impact of the work even though the algorithm seems like a pretty simple extension.  I'm open to suggestions to do it better.}
% \shtodo{Got it. Sure.}
The main contribution of this work is a decentralized, probabilistically-complete MRMP algorithm that is capable of generating kinodynamically feasible plans efficiently.  
Our planner, \Alg{K-CBS}, naturally adapts CBS from MAPF to MRMP. 
% Our planner, K-CBS, is a natural extension of single-robot 
This lifts every off-the-shelf (kinodynamic) sampling-based, tree-based planner to the multi-robot setting and removes all the limitations (assumptions) of state-of-the-art MRMP planners. Specifically, \Alg{K-CBS} operates completely in the continuous state space of the agents, hence, it does not require discretization nor a feedback-control design. It easily works with arbitrary, possibly heterogeneous, nonlinear dynamical models, and is capable of solving very complex MRMP instances efficiently. 
We empirically show the efficacy of \Alg{K-CBS} in many case studies, highlighting our algorithm's generality and performance improvements.

\section{Problem Formulation}
\label{sec:Problem}

Consider $k\in \mathbb{N}$ robotic systems,
in a shared, bounded workspace $W\subseteq \mathbb{R}^d$, $d \in \{2,3\}$, 
which includes a finite set of obstacles $\mathcal{O}$, where every obstacle $o\in \mathcal{O}$ is a closed subset of $W$. Furthermore, every robot $i\in\{1,2,\ldots,k\}$ has a (rigid) body $\mathcal{S}_i$, and its motion is subjected to the following dynamic constraint:
\begin{equation}
    \dot{\mathbf{x}}_i=f_i(\mathbf{x}_i, \mathbf{u}_i), \quad \mathbf{x}_i \in X_i \subseteq \mathbb{R}^{n_i} \quad \mathbf{u}_i=U_i\subseteq\mathbb{R}^{m_i},
    \label{eqn:dyn}
\end{equation}
where $n_i \geq d$, $X_i$ and $U_i$ are robot $i$'s state and input spaces, respectively, and $f_i:X_i\times U_i\rightarrow X_i$ is a continuous and possibly nonlinear function (vector field). With an abuse of notation, we denote by $\mathcal{S}_i(\mathbf{x}_i) \subseteq W$ the set of points that the robot $i$'s body occupies in the workspace when it is at state $\mathbf{x}_i \in X_i$.

Given a time interval $[t_0, t_{fi}]$, where $t_0,t_{fi}\in \mathbb{R}_{\geq 0}$ and $t_0 \leq t_{fi}$, a controller $\mathbf{u}_i:[t_0, t_{fi}]\rightarrow U_i$, 
and initial state $\mathbf{x}_{i,0}\in X_i$ for robot $i$, function $f_i$ can be integrated up to time $t_{fi}$ to form a 
% \emph{trajectory segment} $\mathbf{x}_i^{t_0:t_1}$ for agent $i$, where $\mathbf{x}_i^{t_0:t_1}(t_0)=\mathbf{x}_{i,0}$. 
% Given $\mathbf{x}_{i,0}$, and applying a 
% For $s\in \mathbb{N}$ consecutive controls $u_1,\ldots,u_s$ with respective time intervals
% \begin{equation}
%     [t_0, t_1], [t_1, t_2], \ldots, [t_{s-1}, t_{s}],
% \end{equation}
% we define a 
\emph{trajectory} $T_i:\mathbb{R}_{\geq 0}\rightarrow X_i$ for robot $i$, where $T_i(t_0) = \mathbf{x}_{i,0}$.  
We assume all the robots' trajectories start at time $t_0$.
Further, we assume that, once robot $i$ reaches the end of its trajectory $T_i(t_{fi})$ at time $t_{fi}$, it remains there, i.e., $T_i(t') = T_i(t_{fi})$ for all $t' \geq t_{fi}$.  
% \begin{equation}
    % T_i=\{\mathbf{x}_i^{t_0:t_1}, \mathbf{x}_i^{t_1:t_2}, \ldots, \mathbf{x}_i^{t_{s-1}:t_s}\}
% \end{equation}
% where $\mathbf{x}_i^{t_{l-1}:t_l}(t_l)=\mathbf{x}_i^{t_l:t_{l+1}}(t_l)$ for all $l\in \{1,2,\ldots, s-1\}$, to be a set of $s$ trajectory segments. 
%%%%%%%
Given trajectories  $T_1, T_2, \ldots, T_k$, denote by $t_f = \max \{t_{f1}, t_{f2}, \ldots t_{fk}\}$ the longest time duration of the trajectories.  We say that the trajectories are \emph{collision-free} if the following conditions hold for all $t \in [t_0, t_f]$: 
\begin{enumerate}%[label= (\Roman*)]
    \item $\forall 1\le i\le k$,  $\forall o \in \mathcal{O}$, \quad $\mathcal{S}_i(T_i(t)) \cap o = \emptyset$  \label{cond:obsCollision}
    \item $\forall 1\le i<j\le k$, \qquad \;\;  $\mathcal{S}_i(T_i(t)) \cap \mathcal{S}_j(T_j(t)) = \emptyset$ \label{cond:robotColliision}
\end{enumerate}

%%%%%%%
% Given trajectories  $T_1, T_2, \ldots, T_k$, denote by $t_f = \max \{t_{f1}, t_{f2}, \ldots t_{fk}\}$ the longest time duration of the trajectories.  Then, we say trajectory $T_i$ is \emph{collision-free} if the following conditions hold for all $t \in [t_0, t_f]$: 
% % \begin{align}
% %         \mathcal{S}_i(T_i(t)) &\cap o = \emptyset  \hspace{1.55cm} \forall o \in \mathcal{O}  \label{cond:obsCollision}\\
% %         \mathcal{S}_i(T_i(t)) &\cap \mathcal{S}_j(T_j(t)) = \emptyset \quad  \forall j\in \{1,\ldots,k\}, j \neq i \label{cond:robotColliision}
% % \end{align}
% \begin{enumerate}%[label= (\Roman*)]
%     \item $\mathcal{S}_i(T_i(t)) \cap o = \emptyset$  \hspace{1.4cm} $\forall o \in \mathcal{O}$  \label{cond:obsCollision}
%     \item $\mathcal{S}_i(T_i(t)) \cap \mathcal{S}_j(T_j(t)) = \emptyset$ \quad $\forall j \in \{1,\ldots,k\}, j \neq i$  \label{cond:robotColliision}
% \end{enumerate}
% \shtodo{The quantifiers should come before the condition, unless this is a standard form of writing in robotics.}
% \ml{I thought either way is correct.  But, I'm happy to use a more correct presentation}
% \shtodo{Also, why define collision-free w.r.t. a single trajectory? I think it makes more sense to define w.r.t. the set of trajectories.}
% \ml{ah... I struggled with this one.  I first had it w.r.t to the set of trajectories $T$... but, I had to say for $\forall i \leq k$, the above conditions need to hold, which was hard to read with so many quantifiers.  So, I decided to make it easier to read. If you can find a easy way of presenting it, please go ahead and fix it :)}
\noindent
Condition~\ref{cond:obsCollision} indicates no collision with obstacles, and Condition~\ref{cond:robotColliision} indicates no robot-robot collision.

Given a goal region for every robot, the objective in MRMP is to find a feasible trajectory for every robot to reach its goal without colliding with obstacles nor other robots:
%A formal statement of the problem is as follows. 
% We now define the multi-robot motion planning problem.
\begin{problem}[MRMP]
\label{prob}
% Consider $k$ robotic agents in workspace $W$ with the set of obstacles $\mathcal{O}$.  Given the robot dynamics described in \eqref{eqn:dyn}, initial states $\mathbf{x}_{i,0}, \ldots, \mathbf{x}_{k,0}$, and goal regions $X_1^G, \ldots, X_k^G$, find a controller $\mathbf{u}_i:[t_0, t_{fi}]\rightarrow U_i$ for every $i\in\{1,\ldots, k\}$ such that the obtained trajectory $T_i$ is collision free and takes agent $i$ from $T_i(t_0)=\mathbf{x}_{i,0}$ to $T_i(t_{fi})\in X_i^G$ for every $i\in\{1,\ldots, k\}$.
% while avoiding obstacles $\mathcal{O}$ and all other agents.

Given $k$ robots in workspace $W$ with dynamics as in \eqref{eqn:dyn}, initial states $\mathbf{x}_{i,0} \in X_i$ and goal regions $X_i^G \subseteq X_i$ for all $1\le i\le k$, % \in \{1,\ldots,k\}$, 
find a controller $\mathbf{u}_i:[t_0, t_{fi}]\rightarrow U_i$ for every $1\le i\le k$ such that the induced trajectories $T_1, \ldots, T_k$ are collision-free, and $T_i$ takes agent $i$ from $T_i(t_0)=\mathbf{x}_{i,0}$ to $T_i(t_{fi})\in X_i^G$ for every $1\le i\le k$.
% while avoiding obstacles $\mathcal{O}$ and all other agents.
\end{problem}

\section{Methodology}
\label{sec:Methods}
We now present our solution to Problem~\ref{prob}, which takes inspiration from CBS. To this end, we first describe CBS for MAPF and then introduce the necessary extensions to solve the MRMP problem. Finally, we introduce our algorithm, \emph{Kinodynamic} CBS (\Alg{K-CBS}). 

% \begin{figure*}
%     \centering
%     \includegraphics[width=\textwidth]{}
%     \caption{A visual representation of a conflict, collision, and constraint. Multiple conflicts occur during a single collision between the orange and blue robots within time range $t\in[t_i, t_e]$. The set of conflicts results in a single constraint for each robot, which are reasoned about separately.}
%     \label{fig:constraint}
% \end{figure*}

% \begin{figure*}
%     \centering
%     \includegraphics[width=\textwidth]{Figures/constraint_exp_v1.png}
%     % \caption{A visual representation of a conflict, collision, and constraint. Multiple conflicts occur during a single collision between the orange and blue robots within time range $t\in[t_i, t_e]$. The set of conflicts results in a single constraint for each robot, which are reasoned about separately.}
%     \label{fig:constraint}
% \end{figure*}

\begin{figure*}[t]
    \centering
    \begin{subfigure}{0.24\textwidth}
        \centering
        \includegraphics[width=\textwidth]{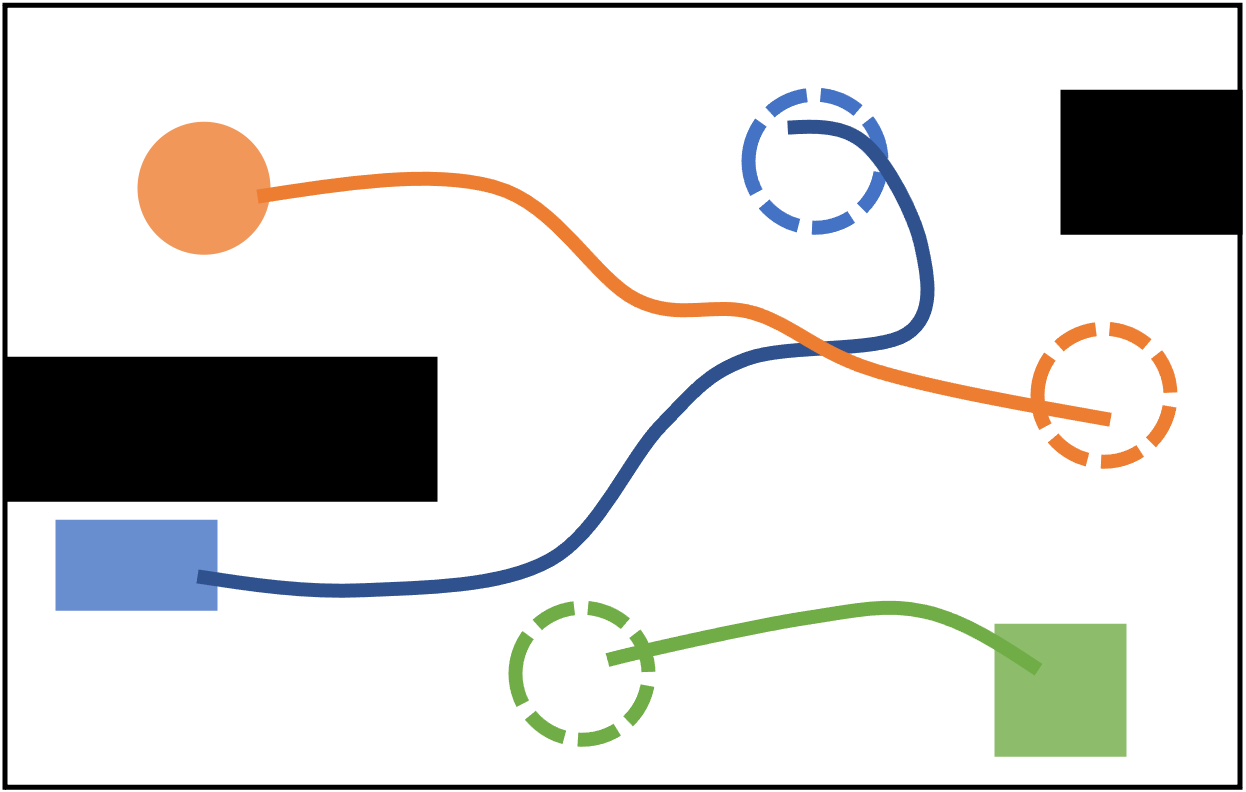}%
        \caption{Proposed plan}%
        \label{fig:demo_initial}%
    \end{subfigure}\hfill
    \begin{subfigure}{0.24\textwidth}
        \centering
        \includegraphics[width=\textwidth]{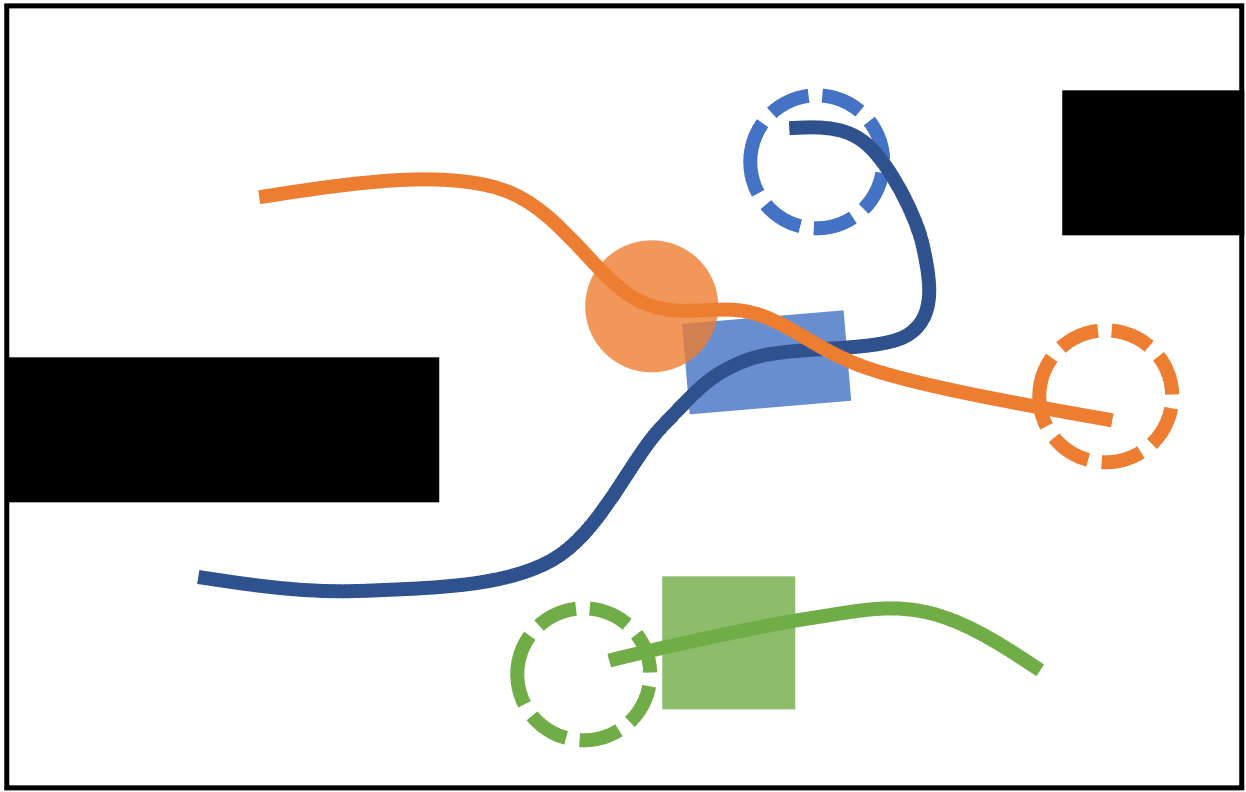}%
        \caption{Collision starts at $t_s$}%
        \label{fig:demo_conflict_ts}%
    \end{subfigure}\hfill
    \begin{subfigure}{0.24\textwidth}
        \centering
        \includegraphics[width=\textwidth]{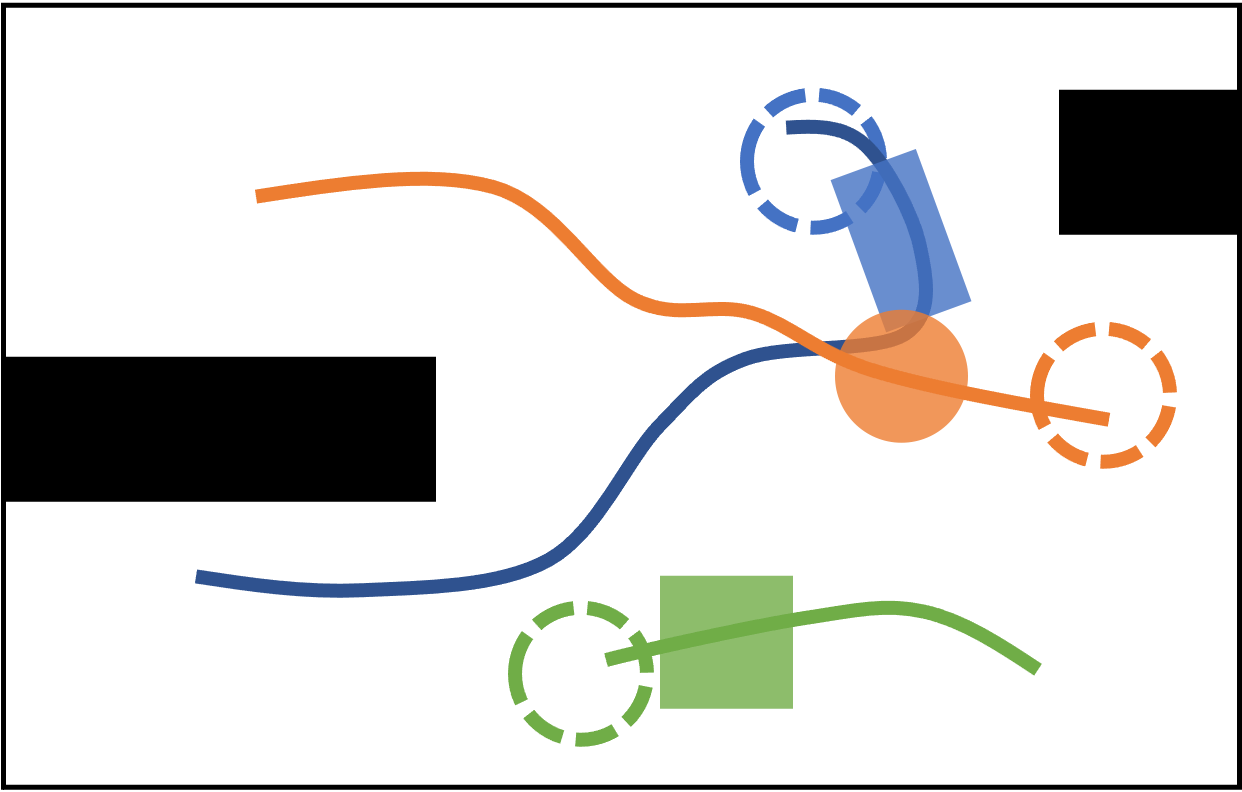}%
        \caption{Collision ends at $t_e$}%
        \label{fig:democ_conflict_te}%
    \end{subfigure}\hfill
    \begin{subfigure}{0.24\textwidth}
        \centering
        \includegraphics[width=\textwidth]{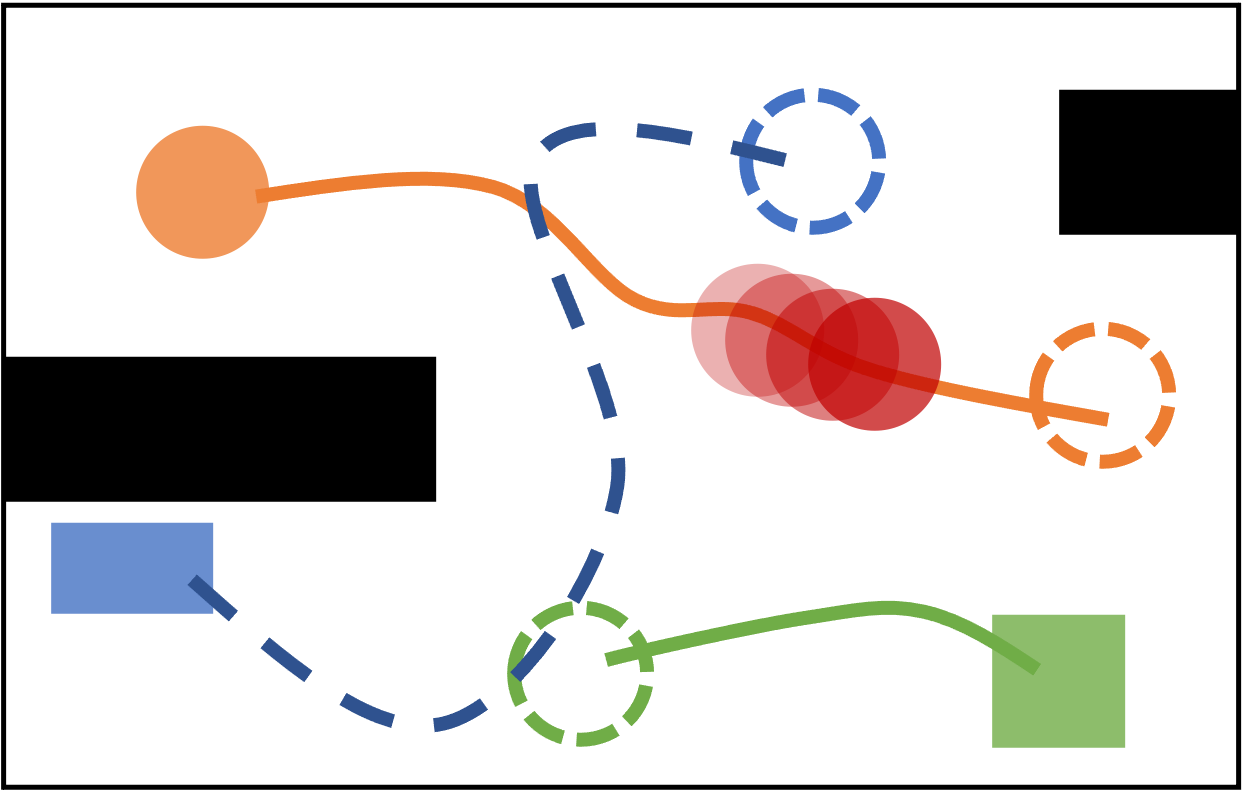}%
        \caption{Constraint on blue agent}%
        \label{fig:democ_conflict_constraint}%
    \end{subfigure}\hfill
    \caption{A visual representation of a conflict, collision, and constraint. Multiple conflicts occur during a single collision between the orange and blue robots within time range $t\in[t_s, t_e]$. The set of conflicts results in a single constraint for each robot, which are reasoned about separately. Constraint on orange agent not shown.}
    \label{fig:constraint}
\end{figure*}

\subsection{Conflict-Based Search for MAPF}
CBS is a two-level search on the space of possible MAPF plans, 
%over a MAPF graph with a finite set of vertices $V$ and edges $E$, 
and consists of a high-level conflict-tree search and a low-level graph search. 
At the high-level, CBS keeps track of a \emph{constraint-tree}, in which each node represents a suggested plan, which might have collisions, referred to as \emph{conflicts}. Initially, a root node is obtained by using a low-level graph search algorithm, typically $A^*$, to find a path for each robot from start to goal, ignoring the other robots (hence the decentralized nature of the method).

At each iteration, CBS picks an unexplored node from the constraint-tree, based on some heuristic, and identifies the conflicts (namely collisions) in the node's corresponding plan.
%by simulating the paths and checking for collisions. 
%\shtodo{Maybe drop the ``by simulating...'}
Then, CBS attempts to resolve the conflicts by creating child nodes based on the conflicts, as follows: if robots $i$ and $j$ collide at time $t$ in vertex $v$ (deonted $\langle i, j, v, t \rangle$), then two children are created for the node, one with the constraint that robot $i$ cannot be in vertex $v$ at time $t$ (denoted $\langle i, v, t \rangle$), and the other dually for robot $j$. Then, in each child node, a low-level search is used to replan a path for the newly-constrained robot, given the set of constraints obtained thus far along the branch of the constraint tree.
% %The idea is to plan for each robot individually using low-level graph search, while the high-level constraint tree tracks conflicts (collisions) between the robots and defines constraints to resolve them. 
This process repeats until either a non-colliding plan is found, or no new nodes are created in the constraint tree, 
at which point CBS returns that there is no solution.

\subsection{MRMP Constraints}
\label{subsec:MRMP Constraints}
% fig 1: show both constraints -- red constraint shown as time dependent (transparency?, 2 figures?) -- nice to have both images -- thinner edge of workspaces -- MR is not a heuristic -- if worth it, turn BP to subsection
CBS definitions of conflicts and constraints fail for MRMP because (i) robots do not occupy vertices, and (ii) robots could have unique bodies. Additionally, due to the continuous time nature of MRMP, collisions occur over time intervals rather than time points as in MAPF. Thus, formal definitions of collisions and constraints for MRMP are needed.
% A MRMP conflict occurs when two robots $i$ and $j$ are located at states $\mathbf{x}_i(t)$ and $\mathbf{x}_j(t)$, respectively, such that, when projecting $\mathbf{x}_i(t)$ and $\mathbf{x}_j(t)$ onto workspace $W$, $\mathcal{S}_i$ and $\mathcal{S}_j$ are not disjoint. 
% We denote a conflict as $\langle i, j, \mathcal{S}_i, \mathcal{S}_j, t \rangle$. 

Given two trajectories $T_i$ and $T_j$ for robots $i$ and $j$, respectively, recall that a \emph{collision} occurs between robots $i$ and $j$ at every time point that Condition~\ref{cond:robotColliision} is violated.  The duration of a collision is the \emph{continuous} time interval the two robots are in collision as they 
%continue to 
evolve along their respective trajectories,
%%%%%%%
and we define a \emph{conflict} to be $K=\langle i,j,[t_s,t_e]\rangle$, where $t_s$ and $t_e$ are the start and end time of this interval.
%%%%%%%
% Let $t_s^K$ and $t_f^K$ be the starting and end time 
% %point and final time point 
% of this interval, respectively (see Fig.~\ref{fig:constraint}). 
% A \emph{conflict} is then
% %Then, we define a \emph{conflict}
% \begin{equation}
%     K=\{\langle i, j, \mathcal{S}_i(T_i(t_r)), \mathcal{S}_j(T_j(t_r)), t_r \rangle\ :\ t_r\in [t_s^K,t_f^K]\},
%     \label{eqn:collision}
% \end{equation}
% to be the set of time points where Condition~\ref{cond:robotColliision} is violated in a continuous time interval. 
Note that there may be multiple conflicts between trajectories $T_i$ and $T_j$, in which case their time intervals are disjoint.

A conflict $K=\langle i,j,[t_s,t_e]\rangle$ is resolved by imposing a \emph{constraint} $\mathcal{C}$ on either robot $i$ or robot $j$. Intuitively, a constraint on robot $i$ 
% requires $\mathcal{S}_i(T_i(t_r))$ not to collide with $\mathcal{S}_j(T_j(t_r))$ for all $t_r\in[t_i^K, t_f^K]$.
requires 
%robot $i$ 
it to avoid colliding with robot $j$ for all $t_r\in[t_s, t_e]$ by viewing $\mathcal{S}_j(T_j(t_r))$ as a moving (time-dependent) obstacle. 
%%%%%%%%
Formally, the constraint on robot $i$ for conflict $K$ is $\mathcal{C}_i=\{\langle i, \mathcal{S}_j(T_j(t_r)), t_r \rangle\ :\ t_r\in [t_s,t_e]\}$
%%%%%%%
% Formally, the constraint on robot $i$ for conflict $K$
% %in \eqref{eqn:collision} is defined as
% is
% \begin{equation}
%     % \mathcal{C}_i =\bigcup_{t_r=t_i}^{t_f}\langle i, \mathcal{S}_j, t_r\rangle,
%     \mathcal{C}_i=\{\langle i, j, \mathcal{S}_j(T_j(t_r)), t_r \rangle\ :\ t_r\in [t_i^K,t_f^K]\},
%     \label{eqn:constraint}
% \end{equation}
% \shtodo{Same comment as above about union.}
%which requires $\mathcal{S}_i(T_i(t_r))$ to not intersect with $\mathcal{S}_j(T_j(t_r))$ for every $t_r\in[t_i^K,t_f^K]$. 
%The dual constraint $\mathcal{C}_j$ on robot $j$ for the same conflict $K$
% may
%r equires it to avoid robot $i$ in the time interval $[t_i^K,t_f^K]$. 
% Note that bodies $\mathcal{S}_i(T_i(t_r))$ and $\mathcal{S}_j(T_i(t_r))$ must only be checked against each other for a single value of $t_r$. A shape along a trajectory must not be disjoint with the entire time interval of the collision. 
Thus, $\mathcal{C}_i$ treats $\mathcal{S}_j(T_j(t_r))$ as a moving obstacle rather than a static one for the entire time duration of the conflict (see Fig.~\ref{fig:constraint}).  We dually define the constraint $\mathcal{C}_j$ for robot $j$.
This splitting is the crux of the performance of the approach.
% \shtodo{is it crucial? since we're doing merges, we can do whatever we want, as long as we eventually merge everybody.}
% \shtodo{crucial for performance}
% A shape along a trajectory must not be disjoint with the entire time interval of the collision. 

Next, we describe how to resolve a conflict by re-planning for an individual robot with the corresponding constraint.

\subsection{Single-Robot Planning under MRMP Constraints}

To (re-)plan for individual robots, we can use any existing sampling-based, tree-search algorithm (e.g. RRT) since they are efficient in finding kinodynamically feasible trajectories.
% Sampling-based tree-search algorithms (e.g. RRT) are efficient in finding kinodynamically feasible trajectories.
A generic form of such planner $\mathcal{X}$ begins by initializing a graph $G$ at the initial state. Then, for $N$ iterations, the algorithm samples a state $\mathbf{x}_{rand}$, extends the motion tree towards $\mathbf{x}_{rand}$ by sampling an input and integrating \eqref{eqn:dyn}, and adds the trajectory to the graph after verifying that it is collision free. The process continues until a trajectory from the initial state to the goal region is found, or $N$ iterations is reached. 

% Although these planners perform well for single-query, single-robot kinodynamic planning, they are incapable of guaranteeing satisfaction of a set of constraints defined in Section~\ref{subsec:MRMP Constraints}. Thus, we present \Alg{Constrained-$\mathcal{X}$} (\Alg{CSTR-$\mathcal{X}$}), an extension of planner $\mathcal{X}$ that is capable of finding trajectories that do not violate a set of constraints $\mathcal{C}$. 

Such planners perform well for single-query, single-robot kinodynamic planning, but off-the-shelf, they are unable to guarantee the satisfaction of a set of constraints as defined in %~\eqref{eqn:constraint}. %(Sec.~\ref{subsec:MRMP Constraints}).  
 Sec.~\ref{subsec:MRMP Constraints}.
To this end, we present \Alg{Constrained-$\mathcal{X}$} (\Alg{CSTR-$\mathcal{X}$}, shown in Alg.~\ref{alg:cstrX}), an extension of planner $\mathcal{X}$ 
%that is capable of 
for finding trajectories that do not violate a set of constraints $\mathcal{C}$.

\begin{algorithm}[b]
\KwIn{($X$, $U$, $f$, $X^G$, $\mathcal{O}$, $\mathbf{x}_0$, N, $\mathcal{C}$, $\mathcal{T}$)}
\KwOut{Collision free trajectory $T$ that respects $\mathcal{C}$}
\eIf{$\mathcal{T}$ is empty}
{
    $G=\{V\leftarrow x_0, E\leftarrow \emptyset\}$\label{ln:newG}\;
}
{
    $G=\mathcal{T}$\label{ln:growT}\;
}
 \For{N iterations}
 {
    $\mathbf{x}_{rand}, \mathbf{x}_{near}\leftarrow$ \Alg{sample}($X$, $G.V$)\label{ln:sampleCSTR}\;
    $\mathbf{x}_{new} \leftarrow$ \Alg{extend}($\mathbf{x}_{rand}$, $\mathbf{x}_{near}$, $U$, $f$)\label{ln:extendCSTR}\;
    \If{isValid($\overrightarrow{\mathbf{x}_{near},\mathbf{x}_{new}}$, $\mathcal{O}$, $\mathcal{C}$)\label{ln:collCheckCSTR}}
    {
        $V\leftarrow V\cup \{\mathbf{x}_{new}\}$;
        $E\leftarrow E \cup \{\overrightarrow{\mathbf{x}_{near},\mathbf{x}_{new}}\}$\label{ln:addCSTR}\;
        \If{$\mathbf{x}_{new} \in X^G$\label{ln:golCheckCSTR}}
        {
            \KwRet{$\overrightarrow{\mathbf{x}_{0},\mathbf{x}_{new}}$\label{ln:retSolCSTR}}            
        }
    }
 }
\KwRet{$G(V,E)$ and ${\mathcal C}_{\max}$}
\caption{Constrained-$\mathcal{X}$ (CSTR-$\mathcal{X}$)}
\label{alg:cstrX}
\end{algorithm}

\Alg{CSTR-$\mathcal{X}$} 
%is shown in Alg.~\ref{alg:cstrX}, which 
takes two additional inputs than $\mathcal{X}$: a set of constraints $\mathcal{C}$, and an existing graph (motion tree) $\mathcal{T}$.
% \shtodo{I'm not sure if ``respects'' is clear enough. It certainly wasn't defined.}
% If $\mathcal{T}$ is non-empty, \Alg{CSTR-$\mathcal{X}$} utilizes the existing graph (Line~\ref{ln:growT}). Otherwise, it initializes a new graph $G$ with a single initial state $\mathbf{x}_0$ (Line~\ref{ln:newG}). The sample, select, and propagate procedures are identical to Planner $\mathcal{X}$ (Lines~\ref{ln:sampleCSTR}-\ref{ln:extendCSTR}). However, when checking a trajectory for validity (Line~\ref{ln:collCheckCSTR}), \Alg{CSTR-$\mathcal{C}$} checks against obstacles \emph{and} the constraint set $\mathcal{C}$.
If $\mathcal{T}$ is non-empty, \Alg{CSTR-$\mathcal{X}$} utilizes the existing graph (Line~\ref{ln:growT}). Otherwise, it initializes a new graph $G$ with a single initial state $\mathbf{x}_0$ (Line~\ref{ln:newG}). Then, to check if a trajectory is valid, \Alg{CSTR-$\mathcal{X}$} checks against both static obstacles $\mathcal{O}$ and constraints (moving obstacles) in $\mathcal{C}$ (Line~\ref{ln:collCheckCSTR}). 
Similar to existing validity checking functions, checking against $\mathcal{C}$ can be done efficiently, e.g., by discretizing the continuous trajectory $\overrightarrow{\mathbf{x}_{near},\mathbf{x}_{new}}$ into a set of points $\{x(\tau_1), \ldots, x(\tau_L)\}$ by sampling time at a fine resolution $\Delta t$, and checking $\mathcal{S}(\mathbf{x}(\tau_l))$ against all $\mathcal{C}^r\in \mathcal{C}$ for all $1\leq l \leq L$ such that $\tau_l$ is the time range of $\mathcal{C}^r$, i.e., $\tau_l \in [t^r_s, t^r_e]$.

The output of \Alg{CSTR-$\mathcal{X}$} is a trajectory that respects all the constraints, if one is found in $N$ iterations; otherwise, it returns the full motion tree, as well as the constraint ${\mathcal C}_{\max}$ that was violated the most.  Our \Alg{K-CBS} algorithm uses the motion tree to continue planning when needed, and uses ${\mathcal C}_{\max}$ to identify tangled plans, as discussed below.
Note that \Alg{CSTR-$\mathcal{X}$} defaults to Planner \Alg{$\mathcal{X}$} if both additional parameters $\mathcal{T}$ and $\mathcal{C}$ are empty sets.  

% Similar to existing collision checking function, checking against $\mathcal{C}$ can be done efficiently by dividing the continuous trajectory $\overrightarrow{\mathbf{x}_{near},\mathbf{x}_{new}}$ into a finite set of discrete states by only considering states at $\Delta t$ intervals. Then, for every state $\mathbf{x}(t)$ in that set, project $\mathbf{x}(t)$ onto the workspace $W$ to get shape $\mathcal{S}$ and evaluate it against all $\mathcal{C}^r\in \mathcal{C}$ such that $t\in [t_i, t_f]^r$ where $[t_i, t_f]^r$ is the time range of $\mathcal{C}^r$.

% \Alg{CSTR-$\mathcal{X}$} can be plugged onto any sampling-based tree search algorithm $\mathcal{X}$ to plan kinodynamically feasible trajectories that respect a set of time-dependent constraints of the form defined in \eqref{eqn:constraint}. Note that, if both additional parameters $\mathcal{T}$ and $\mathcal{C}$ are empty sets, \Alg{CSTR-$\mathcal{X}$} defaults to Planner \Alg{$\mathcal{X}$}. 
% These capabilities make \Alg{CSTR-$\mathcal{X}$} a great choice of low-level search for our algorithm, \emph{Kinodynamic} CBS. 
% \shtodo{I think the last sentence can be removed (also, I wouldn't use ``great'' at any point in the paper).}

% $\{(X_i, U_i, f_i, S_i, \mathbf{x}_{i,0}, X_i^G)\}_{i=1}^k$
\subsection{Kinodynamic Conflict-Based Search (\Alg{K-CBS})}
\begin{algorithm}[!ht]
\KwIn{$\{\mathcal{M}_i\}_{i=1}^{k}$, N, B}
\KwOut{Collision-free trajectories $\{T_1, \ldots, T_k\}$}
$Q, n_0, p_0, \leftarrow \emptyset$\;\label{ln:initKCBS}
\For{every robot $i$\label{ln:foreveryrobot}}
{
    $p_0 \leftarrow p_0 \cup$ \Alg{CSTR-X}($\mathcal{M}_i$, $\mathcal{O}$, $\mathbf{x}_{i,0}$, $\infty$, $\emptyset$, $\emptyset$)\label{ln:planInit}
    % \ml{shouldn't $\infty$ be $N$?}
    \;
}
$n_0$.plan $\leftarrow$ $p_0$; $Q$.\Alg{add}($n_0$) \label{ln:rootNode}\;
% \For{R iterations\label{ln:mainBegin}}
\While{solution not found \label{ln:mainBegin}}
{
    $c \leftarrow$ $Q$.\Alg{top}()\label{ln:selectKCBS}\;
    \eIf{$c$.plan is empty \label{ln:retry}}
    {
        $Q$.\Alg{pop}(); $\mathcal{C}^i \leftarrow$ $c.\mathcal{C}$; $\mathcal{T}^i \leftarrow$ $c.\mathcal{T}$\label{ln:retrySetup}\;
        $T_i',\mathcal{C}_{\max} \leftarrow$ \Alg{CSTR-X}($\mathcal{M}_i$, $\mathcal{O}$, $\mathbf{x}_{i,0}$, N, $\mathcal{T}^i$, $\mathcal{C}^i$)\label{ln:endRetry}\;
        \eIf{$T_i'$ is a trajectory}
        {
            $(c$.plan$\setminus T_i) \cup T_i'$\label{ln:retrySuc}\;
        }
        {
            $c$.tree $\leftarrow T_i'$ \label{ln:retryFail}\; 
            \If{\Alg{shouldMerge}($\mathcal{C}_{\max}$, B)\label{ln:mergeIf1}}
             {
                $\{\mathcal{M}_i\}_{i=1}^{k-1}\leftarrow$ \Alg{merge}($\{\mathcal{M}_i\}_{i=1}^{k}$,$\mathcal{C}_{\max}$)\label{ln:merge1}\;
                 $P\leftarrow$\Alg{K-CBS}($\{\mathcal{M}_i\}_{i=1}^{k-1}$, N, $B$)\;
                 \KwRet{$P$ \label{ln:mergeIf2}}
             }
        }
        $Q$.\Alg{add}($c$)\label{ln:retryAdd}\;
    }
    {
        K $\leftarrow$ \Alg{validatePlan}($c$.plan)\label{ln:validate}\;
        \uIf{K is empty\label{ln:isKempty}}
        {
            \KwRet{$c$.plan\label{ln:retPlan}}
        }
        \uElseIf{\Alg{shouldMerge}(K, B)\label{ln:shouldMerge}}
        {
            $\{\mathcal{M}_i\}_{i=1}^{k-1}\leftarrow$ \Alg{merge}($\{\mathcal{M}_i\}_{i=1}^{k}$, $K$)\label{ln:merge}\;
            $P\leftarrow$\Alg{K-CBS}($\{\mathcal{M}_i\}_{i=1}^{k-1}$, N, $B$)\label{ln:solveMeta}\label{ln:planMeta}\;
            \KwRet{$P$}
        }
        \Else{
            $Q$.\Alg{pop}()\label{ln:branchPop}\;
            \For{every robot $i$ in K\label{ln:branchForLoop}}
            {
                $\mathcal{C}^i \leftarrow$ $c.\mathcal{C} \; \cup$ K.\Alg{getCSTR}($i$); $c_{new} \leftarrow \emptyset$\label{ln:branchSet-up}\;
                $T_i' \leftarrow$ \Alg{CSTR-X}($\mathcal{M}_i$, $\mathcal{O}$, $\mathbf{x}_{i,0}$, N, $\emptyset$, $\mathcal{C}^i$)\label{ln:branchReplan}\;
                \eIf{$T_i'$ is a trajectory}
                {
                    $c_{new}\leftarrow(c$.plan$\setminus T_i) \cup T_i'$\label{ln:banchSuc}\; 
                    % $c_{new}$cost $\leftarrow c_{new}$.\Alg{getCost}()\;
                }
                {
                    $c_{new}$.tree $\leftarrow T_i'$\label{ln:branchFail}\; 
                }
            $Q$.\Alg{add}($c_{new}$)\label{ln:branchAdd}\;   
            }
        }
    }
    \label{ln:mainEnd}
}
\caption{Kinodynamic-CBS (K-CBS)}
\label{alg:K-CBS}
\end{algorithm}
% \jk{To Do: add labels to alg and reference in text.}
We now introduce the high-level algorithm of \Alg{K-CBS} (Alg.~\ref{alg:K-CBS}). %which is outlined in Alg.~\ref{alg:K-CBS}. 
Recall that the input to Problem~\ref{prob} comprises a state space $X_i$, input space $U_i$, vector field $f_i$, body $\mathcal{S}_i$, initial state $\mathbf{x}_{i,0}$, and goal region $X_i^G$ for every robot $i\in\{1, \ldots, k\}$. For brevity, we define a model $\mathcal{M}_i=\{X_i, U_i, f_i, \mathbf{x}_{i,0}, X_i^G\}$ as the inputs pertaining to robot $i$. The input to \Alg{K-CBS} is then a model for every robot $\{\mathcal{M}_i\}_{i=1}^{k}$, 
%a parameter $R$ of the maximum number of iterations of \Alg{K-CBS}, 
% \shtodo{what does ``iterations of the constraint tree'' mean?}
a parameter $N$ of the maximum number of iterations of \Alg{CSTR-$\mathcal{X}$}, and a merging parameter $B$. The output of \Alg{K-CBS} is a set of trajectories $\{T_1, \ldots, T_k\}$ 
% such that all trajectories are collision-free when executed simultaneously.
that is the solution to the MRMP problem.

\Alg{K-CBS} begins by calculating trajectories for every robot individually by calling \Alg{CSTR-$\mathcal{X}$} with an empty initial tree and empty set of constraints. These trajectories serve as the initial plan for the root node of the constraint-tree, which is added to a priority-queue $Q$ (Lines~\ref{ln:initKCBS}-~\ref{ln:rootNode}). 
The cost of node $n\in Q$ is the sum of control durations over all trajectories in the plan of $n$ (and $\infty$ if $n$ does not contain a plan yet). The smaller the cost of $n$, the higher its priority in $Q$.

At every iteration of \Alg{K-CBS}, a node $c \in Q$ with minimal cost is selected (Line~\ref{ln:selectKCBS}). If $c$ has a plan, it is evaluated for conflicts (Line~\ref{ln:validate}). If no conflict exists, then the plan is returned as the solution. 
Otherwise, an arbitrary conflict $K$ is selected and for each robot in $K$, and a constraint is generated.
%Otherwise every robots involved with a conflict is given a constraint and
Then, \Alg{CSTR-$\mathcal{X}$} is called to calculate a valid trajectory, storing the result of the search in a new node, which is added to $Q$ (Lines~\ref{ln:branchPop}-~\ref{ln:branchAdd}). If $c$ has no plan, another call to \Alg{CSTR-$\mathcal{X}$} is performed in an attempt to obtain such a plan (Lines~\ref{ln:retry}-~\ref{ln:retryAdd}). 

As \Alg{K-CBS} explores the space of all plans, it is possible that many conflicts are found between a particular pair of robots.  This can indicate that their motions are ``coupled,'' and hence, it is difficult to plan for them separately. In that case, \Alg{K-CBS} can merge the pair into a meta-robot and plan for an MRMP problem with $k-1$ robots %, where one robot is a meta-robot
(Lines~\ref{ln:mergeIf1}--\ref{ln:mergeIf2} and~\ref{ln:shouldMerge}--\ref{ln:planMeta}). We elaborate on this process below.

\begin{figure}[!ht]
    \centering
    \includegraphics[scale=0.28]{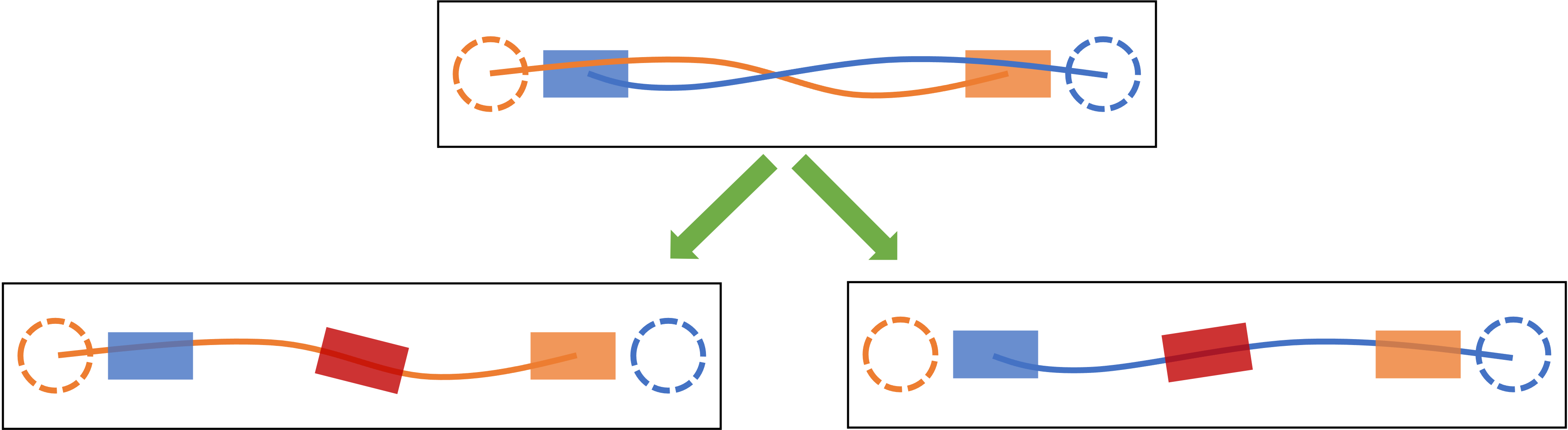}
    \caption{K-CBS tree. Robots must cross a narrow corridor.}
    \label{fig:incomplete}
\end{figure}

\begin{figure*}
    \centering
    \begin{subfigure}{0.25\textwidth}
        \centering
        \includegraphics[scale=0.26]{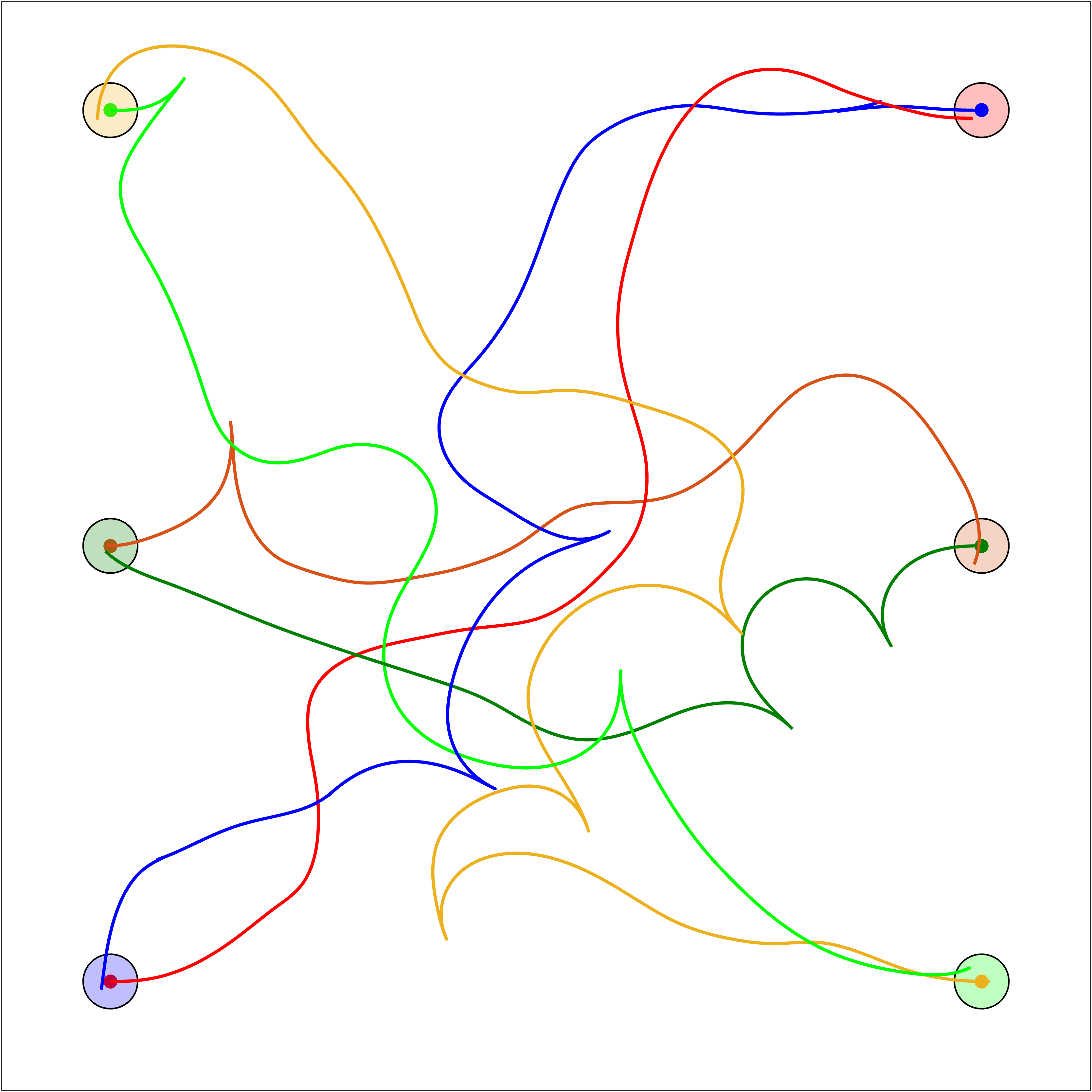}%
        \caption{Open Env. ($10\times 10$)}%
        \label{fig:envOpen}%
    \end{subfigure}\hfill
    \begin{subfigure}{0.25\textwidth}
        \centering
        \includegraphics[scale=0.26]{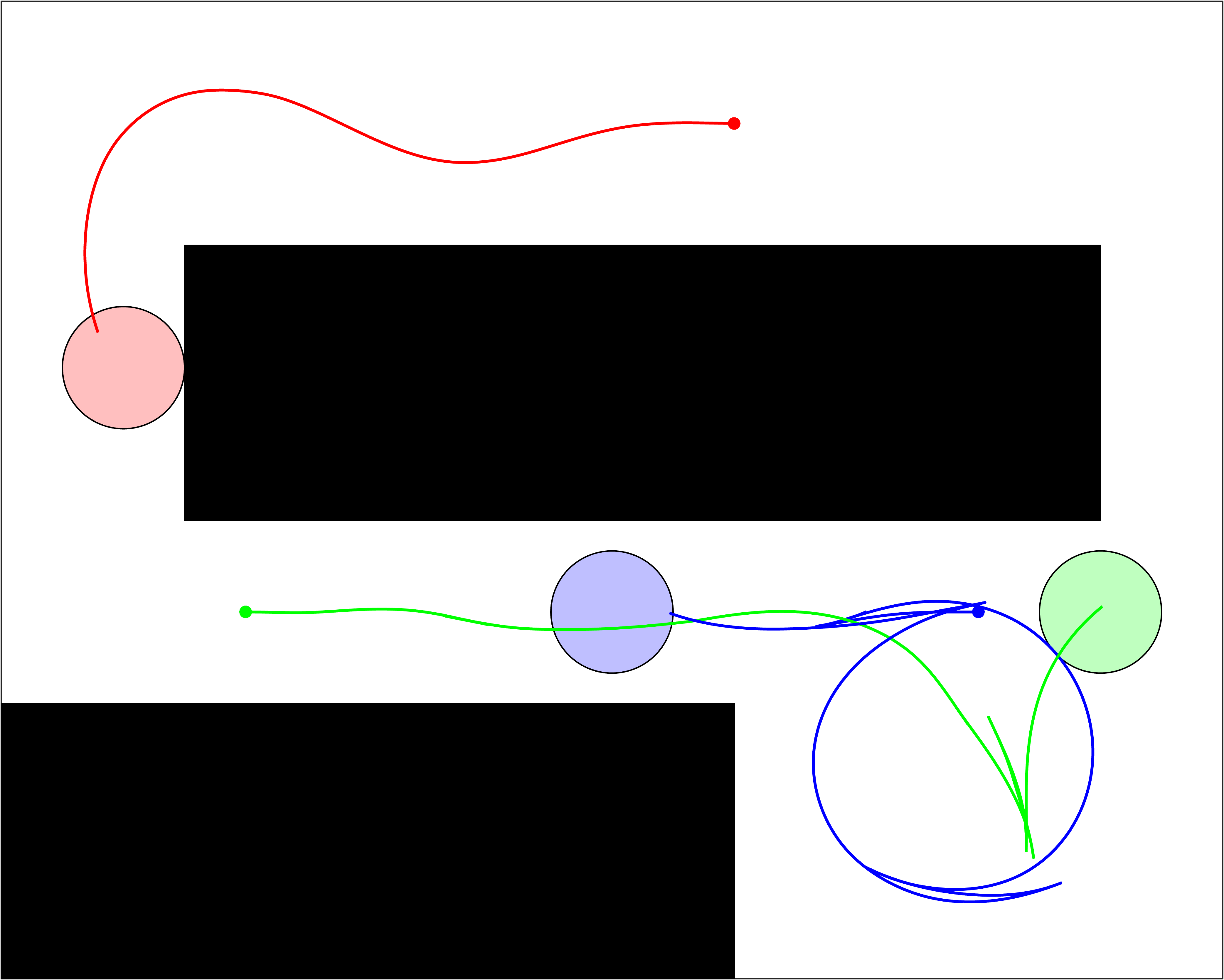}%
        \caption{Narrow Env. ($5\times 4$)}%
        \label{fig:envNarrow}%
    \end{subfigure}\hfill
    \begin{subfigure}{0.25\textwidth}
        \centering
        \includegraphics[scale=0.26]{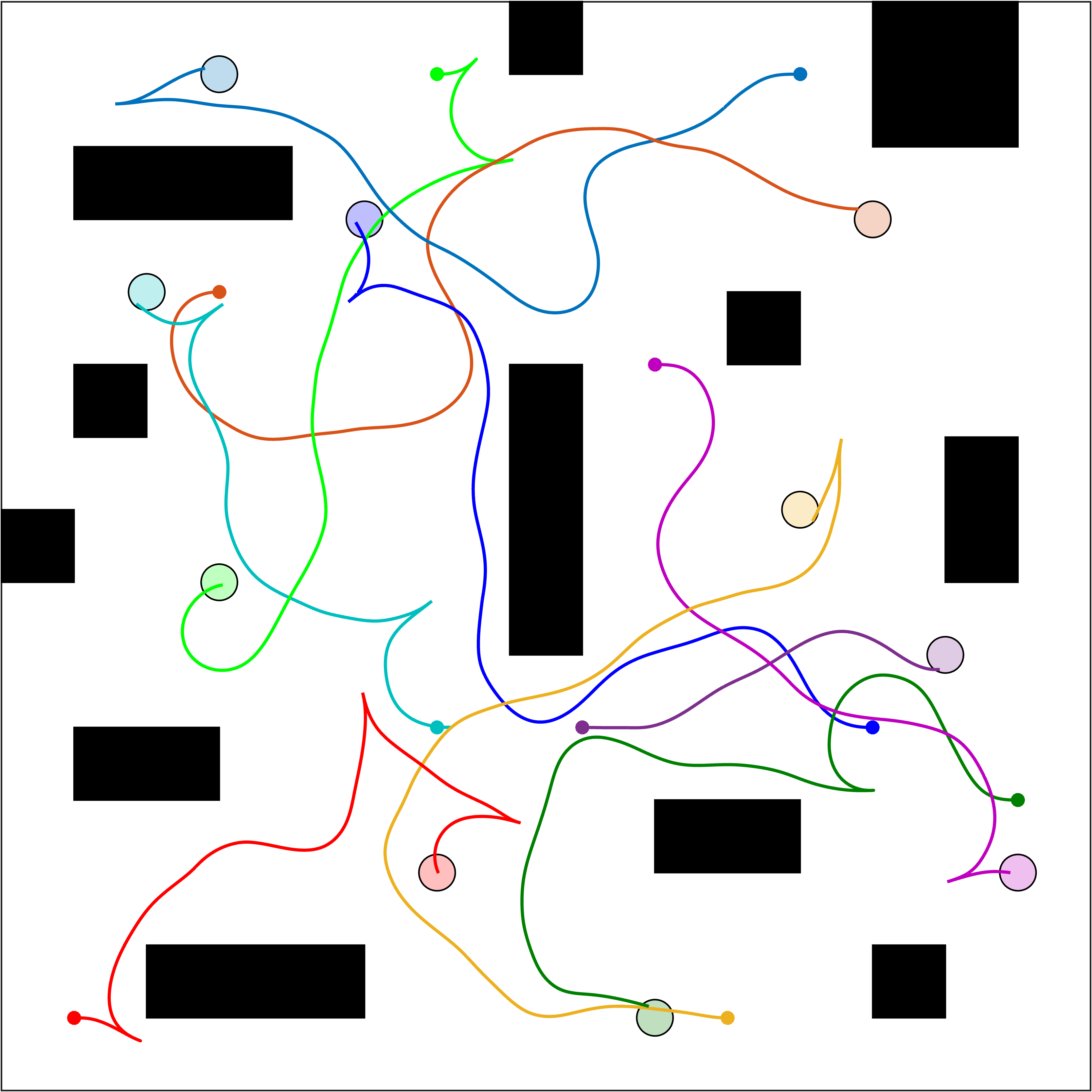}%
        \caption{Cluttered Env. ($15\times 15$)}%
        \label{fig:envCluttered}%
    \end{subfigure}\hfill
    \begin{subfigure}{0.25\textwidth}
        \centering
        \includegraphics[scale=0.26]{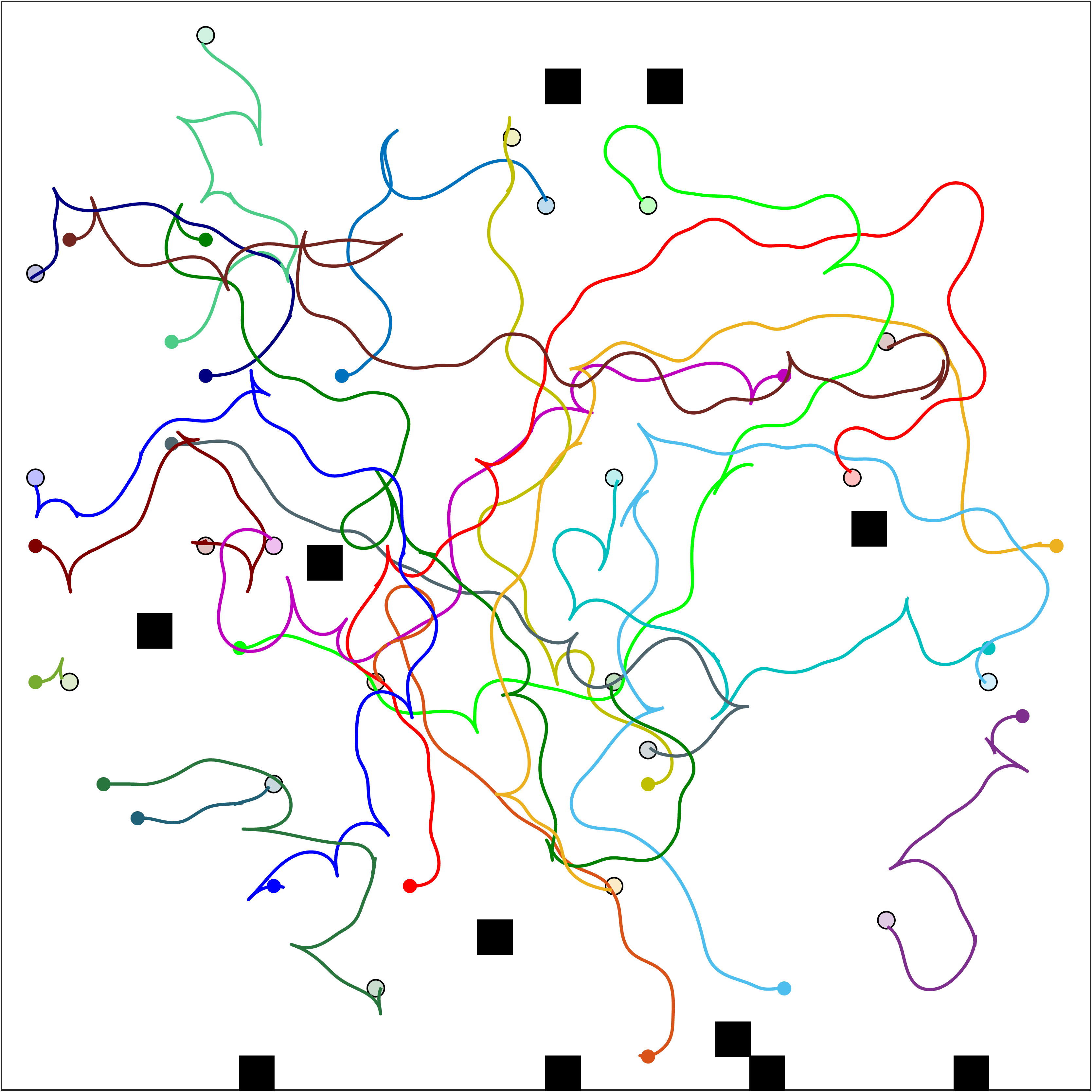}%
        \caption{Large Env. ($33\times 33$)}%
        \label{fig:envLarge}%
    \end{subfigure}\hfill
    \caption{MRMP benchmark environments. The initial states are shown with small circles and goal regions with large circles.}
    \label{fig:envs}
\end{figure*}

\subsubsection*{Merge and Restart}
\label{subsec:MR}
To clarify the concept of ``coupled'' robots,
consider the example 
%shown 
in Fig.~\ref{fig:incomplete}, where two robots must cross a narrow corridor that is just wide enough for the both of them. Assume that both robots have constant velocity toward their respective target and they only control their steering angle. When presenting \Alg{K-CBS} with this problem instance, the root node of the constraint tree could look like the one seen in at the top of Fig.~\ref{fig:incomplete}. Naturally, since both robots attempt to traverse through the middle of the corridor, there is a collision. This creates a branching of the root node into two child nodes. One with a constraint on the blue robot to avoid the orange robot, and dually for the orange robot (constraints are visualized in red). 
% \ml{shouldn't the constraints be larger and time dependent?}
%\shtodo{Yes, but I think that for brevity we can just say that the constraints prevent both robots from crossing the corridor altogether.}
In both cases, the constraint entirely prevents a valid plan for either agent, although  a-priori a plan does exist.
%Then, when \Alg{CSTR-$\mathcal{X}$} is called to re-plan for the robots, it is likely they that the new trajectories traverse the middle of the corridor again. 
If this behavior is left unresolved, \Alg{K-CBS} would never find a valid plan. Thus, we employ the \emph{merge and restart} technique, borrowed from the discrete setting~\cite{boyarski2015icbs}. Note that in the discrete setting, this is merely a speedup heuristic, whereas for the continuous case it is crucial for completeness.
%This reasoning motivates an additional procedure of \Alg{K-CBS} known as \emph{merge and restart} (MR).
%MR is an improvement of CBS that allows highly coupled robots to be merged into a meta-robot and planned for simultaneously~\cite{boyarski2015icbs}, which we adapt for \Alg{K-CBS}. 
While many merge techniques are available, we utilize a simple, experimentally effective merge policy from~\cite{sharon2012meta}, whereby robots $i$ and $j$ are merged if the number of conflicts between them exceeds some pre-defined threshold $B$. 

The merge policy for \Alg{K-CBS} is shown in Alg.~\ref{alg:K-CBS} (Lines~\ref{ln:mergeIf1}--\ref{ln:mergeIf2} and~\ref{ln:shouldMerge}--\ref{ln:planMeta}). If the \Alg{shouldMerge}() procedure returns true via the requirement above, the two robots are composed together (e.g., Line~\ref{ln:merge}) and another instance of \Alg{K-CBS} is solved for $k-1$ robots where one of the robots in the newly formed problem is a meta-robot (e.g., Line~\ref{ln:solveMeta}). Note that in the case of Fig.~\ref{fig:incomplete}, eventually \Alg{K-CBS} merges the two robots together and plans trajectories that allow both robots to navigate the corridor simultaneously. Also note that we count as conflicts both nodes where a conflict occurs, as well as nodes where no plan is found, via $\mathcal{C}_{\max}$.

%%%%%%%%%%%%%%%%%%%%%%%%%%%%%%%%
\subsection{Completeness}
\label{sec:completness}
% It is important to note that 
Algorithm
\Alg{K-CBS} inherits the probabilistic completeness property of the underlying Planner $\mathcal{X}$.  
\begin{theorem}[Completeness]
    \label{thm:completeness}
    %If Planner $\mathcal{X}$ is probabilistically complete, then so is \Alg{K-CBS}.
    \Alg{K-CBS} is probabilistically complete if Planner $\mathcal{X}$ is probabilistically complete.
\end{theorem}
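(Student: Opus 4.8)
The plan is to prove the theorem by induction on the number of robots $k$, using the merge-and-restart mechanism as the engine of the induction. The base case is $k=1$: here there are no robot--robot conflicts, so every call to \Alg{validatePlan} returns an empty conflict and \Alg{K-CBS} degenerates to repeatedly invoking \Alg{CSTR-$\mathcal{X}$} on the single (meta-)robot with an empty constraint set. Since \Alg{CSTR-$\mathcal{X}$} reuses and grows the motion tree $\mathcal{T}$ across successive retries (Line~\ref{ln:growT}), these repeated calls are equivalent to running Planner $\mathcal{X}$ for an unbounded number of iterations. Probabilistic completeness of $\mathcal{X}$ then gives that the probability of returning a solution tends to one as planning time tends to infinity, establishing the base case.

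For the inductive step, I would first observe that solution existence is preserved under merging: a solution $T_1,\dots,T_k$ to the $k$-robot instance yields, by stacking the trajectories of the merged pair into a single composed (meta-)robot trajectory, a solution to the $(k-1)$-robot instance, and conversely. Hence if the original problem is solvable, so is every instance reachable by merging, all the way down to the fully merged single meta-robot whose composed state space is $X_1\times\cdots\times X_k$. By the induction hypothesis, the recursive call to \Alg{K-CBS} on $k-1$ robots (Lines~\ref{ln:merge1} and~\ref{ln:merge}) is probabilistically complete, and its result is returned directly. Thus it suffices to show that, whenever \Alg{K-CBS} on $k$ robots does not itself return a solution, it triggers a merge, and does so after a bounded amount of computation, so that the entire infinite tail of planning time is inherited by the probabilistically-complete recursive call.

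The key lemma, and the crux of the argument, is that a merge is forced after finitely many iterations. I would establish this by a counting (pigeonhole) argument over the $\binom{k}{2}$ robot pairs. Every iteration of the main loop that neither returns a solution nor triggers a merge falls into one of two categories: it detects a conflict --- either an explicit robot--robot conflict $K$ via \Alg{validatePlan}, or an implicit one recorded through $\mathcal{C}_{\max}$ when \Alg{CSTR-$\mathcal{X}$} fails to find a plan (Line~\ref{ln:retryFail}) --- in which case the conflict counter of the associated pair is incremented; or it is a ``retry-success'' iteration that merely promotes a node from having no plan to having a plan. Conflict-detecting iterations are bounded by $B\binom{k}{2}+1$, since exceeding $B$ for any single pair triggers \Alg{shouldMerge}; and retry-success iterations are bounded because each node can be promoted at most once before it is validated and consumed by branching, while the number of nodes created is itself controlled by the number of conflicts (each branching spawns at most two children). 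Consequently, if no solution is returned, the total number of iterations before a merge is deterministically bounded, so the fully merged base case is reached within a bounded number of low-level iterations.

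I expect the main obstacle to lie precisely in this last lemma: one must rule out the scenario in which \Alg{K-CBS} runs forever at a fixed robot count without ever forcing a merge. The delicate points are (i) correctly accounting for nodes that repeatedly fail to obtain a plan, which is exactly why the algorithm counts $\mathcal{C}_{\max}$ failures as conflicts, and (ii) bounding the non-conflict ``retry-success'' iterations so that conflict-free progress cannot stall the counter indefinitely. Once the lemma is in place, the conclusion follows by combining it with the base case: a solution, if one exists, is reached either at some intermediate robot count or --- within bounded time --- at the fully merged instance, where the probabilistic completeness of Planner $\mathcal{X}$ drives the success probability to one as planning time grows without bound.
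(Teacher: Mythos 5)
Your proposal is correct and takes essentially the same route as the paper: the paper's proof rests on exactly your key lemma, classifying every main-loop iteration as a merge (at most $k$ along a branch), a conflict-node extension (at most $k(k-1)B$ before a merge is forced), or a retry of a plan-less node (at most $B$ per node, counted through $\mathcal{C}_{\max}$), so that every branch either yields a plan or fully merges, reducing to the centralized case where Planner $\mathcal{X}$'s probabilistic completeness applies. Your induction on $k$ and your explicit observation that solvability is preserved under merging simply formalize steps the paper's brief proof leaves implicit.
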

\begin{proof}
Ultimately, probabilistic completeness follows from the fact that in the worst case, \Alg{K-CBS} merges all the robots and becomes centralized, and using the completeness of Planner $\mathcal{X}$. 
To show this observe that at each iteration either (1) a merge happens, (2) a conflict node is extended, or (3) a node without a plan receives more planning time. (1) can only happen $k$ times along a branch, (2) can happen at most $k(k-1)B$ times before a merge is made, and (3) happens at most $B$ times for each node. Therefore, in each branch of the tree, eventually either a plan is found or all the agents merge and \Alg{K-CBS} becomes centralized.
\end{proof}

%%%%%%%%%%%%%%%%%%%%%%%%%%%%%%%

Finally, we remark that it may be possible to further improve the performance of \Alg{K-CBS} using heuristics.  Nonetheless, this requires careful treatment and deep understanding of the MRMP problems.  Not every successful heuristic used in MAPF is beneficial in the continuous domain.  For instance, work \cite{boyrasky2015don} shows that, in CBS, significant speedups can be achieved by \emph{bypassing} (BP) a split of a constraint-tree node by generating a single constraint and attempting to resolve it by calling the low-level planner in the same node. Our studies however do not show such improvements by incorporating BP in \Alg{K-CBS}.

% \subsection{Speedup Heuristic: Bypassing Conflicts}
% To further improve the performance of \Alg{K-CBS}, we take inspirations from \cite{boyrasky2015don}.  There, it is shown that splitting a collision into two child-nodes is not always helpful. Rather, it is beneficial to \emph{bypass} (BP) a split by generating a single constraint, and attempting to resolve it by calling the low-level planner in the same constraint-tree node. This essentially prunes the conflict tree, leading to improved performance.
% %prior to splitting in the same constraint-tree node.  %rather, we should simply replace the colliding plan with the new plan. 

% \Alg{K-CBS} with BP is identical to Alg.~\ref{alg:K-CBS} with one key difference in the implementation: when calling \Alg{CSTR-$\mathcal{X}$} for robot $i$, if a trajectory $T_i'$ is found in less than $N$ iterations, the plan corresponding to that node is immediately evaluated for conflicts. If a conflict exists, then we generate a constraint, prune the motion tree $\mathcal{T}$ based on the additional constraint, and then replan again. This process continues until $N$ iterations have been executed -- at which point the most recent trajectory is returned to \Alg{K-CBS}.  
% \ml{Confirm this and modify the following:}
% Our empirical evaluations show that fast computation times can be achieved with the BP heuristic.  The speedup becomes more significant when the number iterations $N$ is large.

\section{Experiments and Benchmarks}
\label{sec:Exp}

Here, we show the performance of \Alg{K-CBS} in four different environments 
(Open, Narrow, Cluttered, and Large) shown in Fig.~\ref{fig:envs}
with various numbers of robots, each with 2nd-order car dynamics (5-dimensional state space):
\begin{align*}
    \dot x= v \cos \theta, \; \dot y = v \sin \theta, \; \dot \theta = \frac{v}{l} \tan \phi, \;
    \dot v = u_1, \;  \dot \phi = u_2
\end{align*}
where $x$ and $y$ define the position, $\theta$ is the orientation, $v$ is the velocity, $\phi$ is the steering angle, $u_1$ is the acceleration rate and $u_2$ is the steering rate. 
%and the controls $u_1$ and $u_2$ are the acceleration and steering rate, respectively.  
Each robot has a rectangular shape with length $l=0.7$ and width $w=0.5$.

In each environment, we benchmarked the performance and scalability of \Alg{K-CBS} against two baseline approaches: centralized \Alg{RRT} (\Alg{cRRT}) and prioritized \Alg{RRT} (\Alg{pRRT})~\cite{lavalle2006planning}.\footnote{Comparison against  \cite{Le:ICAPS:2017,Le:RAL:2019} (without the PID controller) did not materialize due to unavailability of the code and complicacy
%\shtodo{changed from "complexity" to "complicacy", so not to confuse with computational complexity.}
of the algorithm.}  
\Alg{cRRT} is a centralized method that plans in the composite space of all the robots, whereas \Alg{pRRT} 
is a decentralized algorithm that plans for each robot $i$ in turn, treating the plans for robots $1, \ldots, i-1$ as timed obstacles.
%moving obstacles.% while planning for agent $i$.
We also varied the merging parameter $B$ to show its effect on the performance of \Alg{K-CBS}.
Every benchmark consisted of $100$ problem instances with a timeout of $5$ minutes. 
The comparison metrics are \emph{success rate}, \emph{computation time} (of successful runs), and \emph{merge rate}. 
The results are shown in Fig.~\ref{fig:bench_open}-\ref{fig:bench_cluttered} and Table~\ref{tab:benchmarkTable}.  

Our implementation of \Alg{K-CBS} utilized the classical RRT as the low-lever planner. The code is publicly available on GitHub~\cite{sourceCode}.
% \shtodo{removed the additional details, as they don't seem necessary.}
% Our implementation of \Alg{K-CBS} utilized the classical RRT as the low-lever planner.   
% Our code is publicly available on GitHub~\cite{sourceCode}, which uses the Open Motion Planning Library (OMPL) \cite{sucan2012the-open-motion-planning-library}.  
%We also implemented \Alg{cRRT} and \Alg{pRRT} in OMPL.
The benchmarks were performed on 
%the same machine with 
AMD Ryzen 7 3.9 GHz CPU and 64 GB of RAM.

\begin{figure*}%
    \centering
    % \begin{subfigure}{\textwidth}
        \includegraphics[width=.8\textwidth]{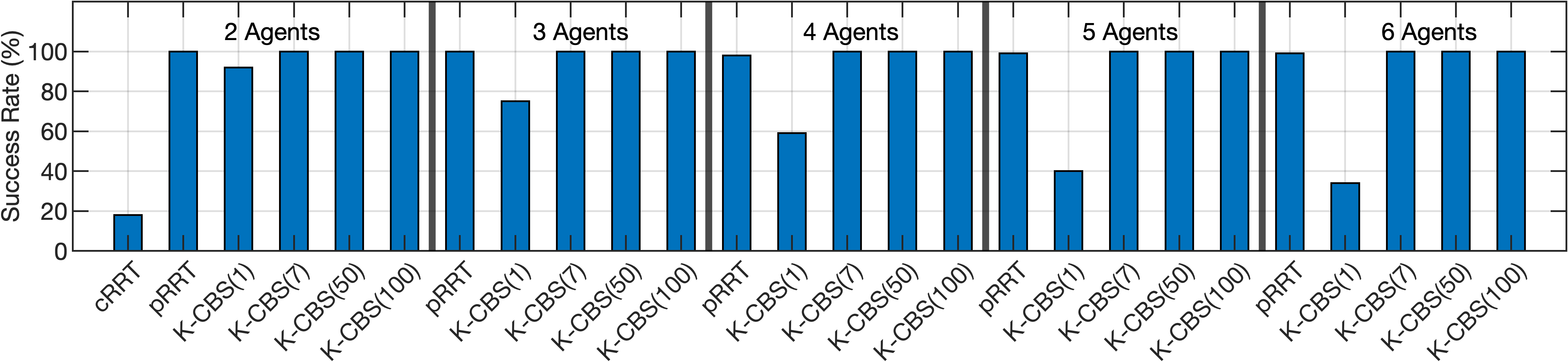}%
        % \caption{Success Rate}%
        % \label{fig:bench_empty_space_successRate}%
    % \end{subfigure}
    % \newline
    
    % \begin{subfigure}{\textwidth}
        \includegraphics[width=.8\textwidth]{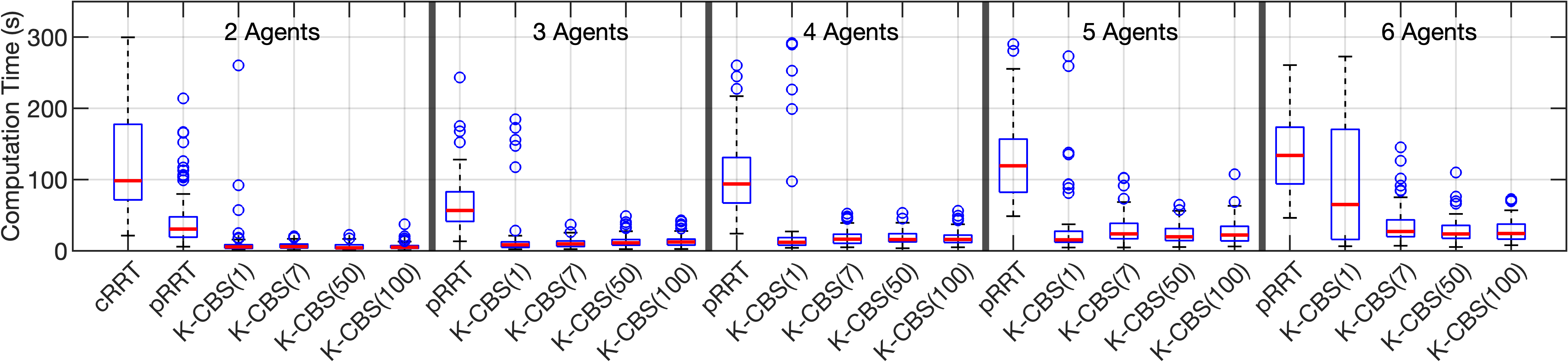}%
        % \caption{Computation Times}%
        % \label{fig:bench_empty_space_compTime}%
    % \end{subfigure}\hfill%
    \caption{Benchmark results for the Open environment in Fig.~\ref{fig:envOpen} 
    % comparing \Alg{cRRT}, \Alg{pRRT}, and \Alg{K-CBS} with varying values of $B$.
    }
    \label{fig:bench_open}
\end{figure*}
\begin{figure*}%
\centering
    % \begin{subfigure}{0.9\columnwidth}
    \includegraphics[width=.32\textwidth]{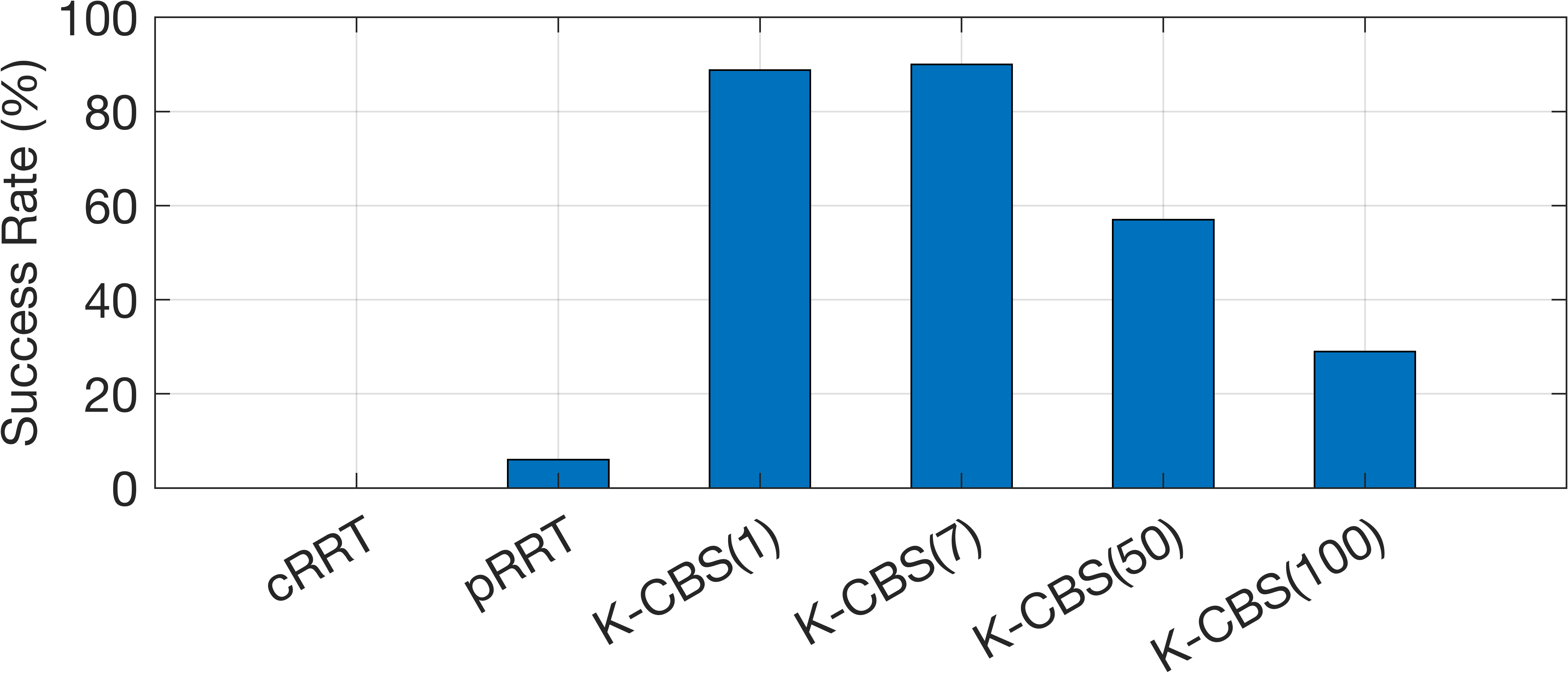}%
    % \caption{Success Rate}%
    % \label{fig:bench_corridor_space_successRate}%
    % \end{subfigure}\hfill
    % \begin{subfigure}{0.45\columnwidth}
    \hfill
    \includegraphics[width=.32\textwidth]{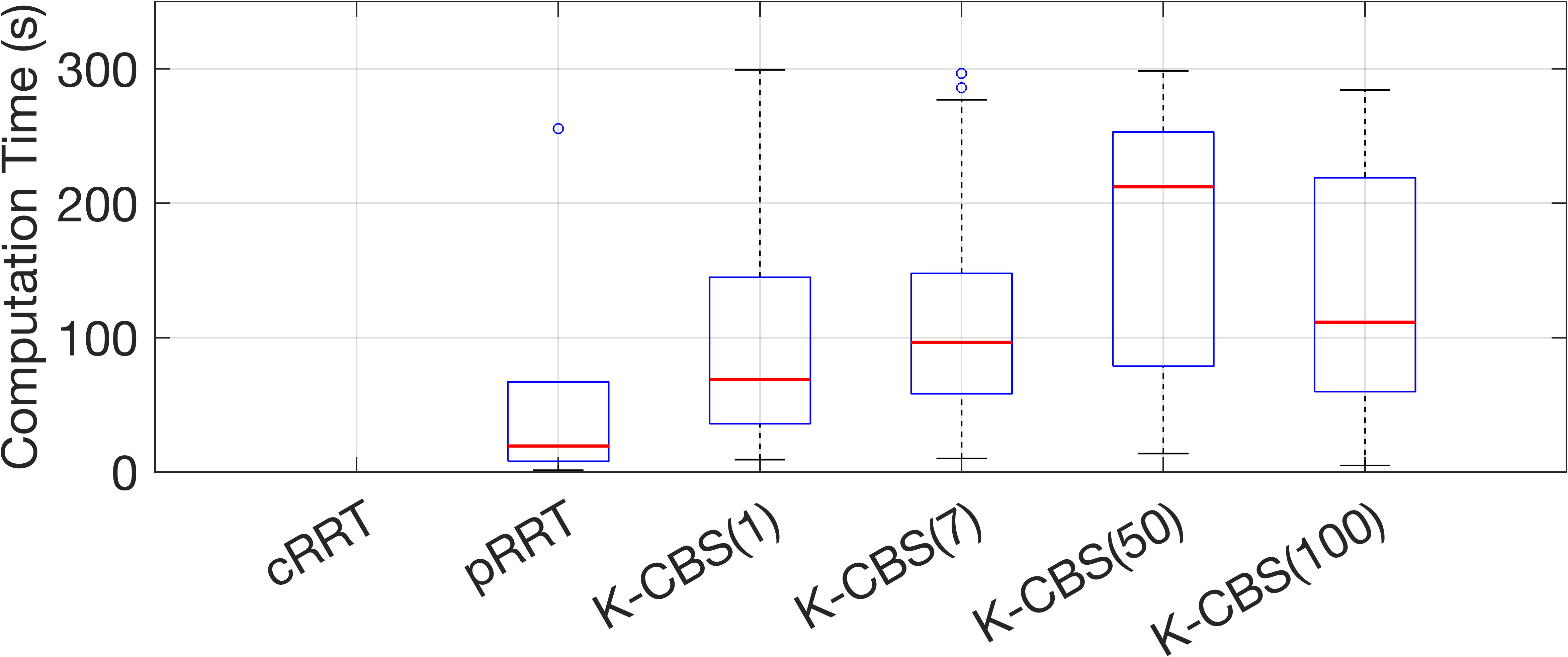}%
    \hfill
    % \caption{Computation Times}%
    % \label{fig:bench_corridor_space_compTime}%
    % \end{subfigure}\newline%
    % \begin{subfigure}{0.45\columnwidth}
    \includegraphics[width=.32\textwidth]{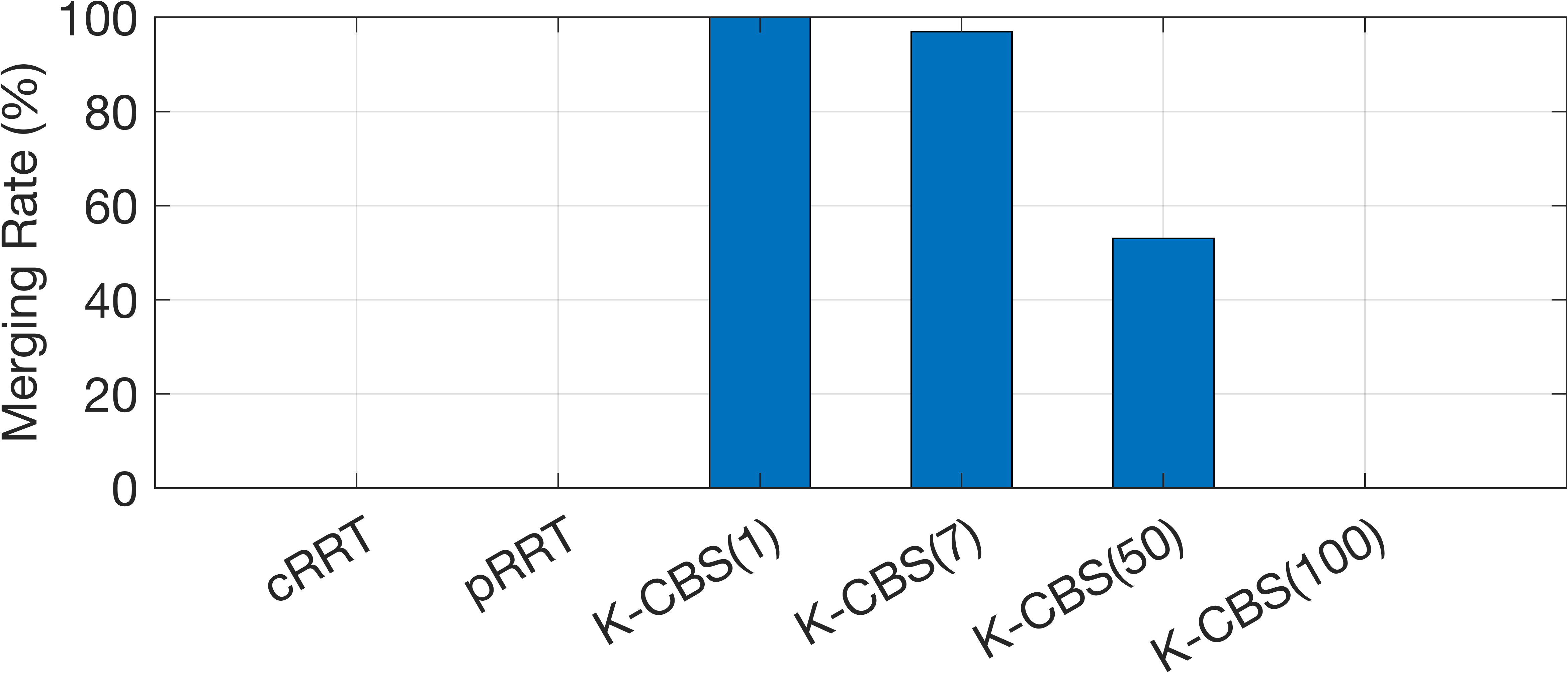}%
    % \caption{Merge Rates}%
    % \label{fig:bench_corridor_space_mergeRates}%
    % \end{subfigure}\newline%
    \hfill
    \caption{Benchmarks of the Narrow environment in Fig.\ref{fig:envNarrow}
    % comparing \Alg{cRRT}, \Alg{pRRT}, and \Alg{K-CBS} with varying values of $B$.
    }
    \label{fig:bench_corridor}
\end{figure*}
\begin{figure*}
    \centering
    % \begin{subfigure}{\textwidth}
        \includegraphics[width=0.85\textwidth]{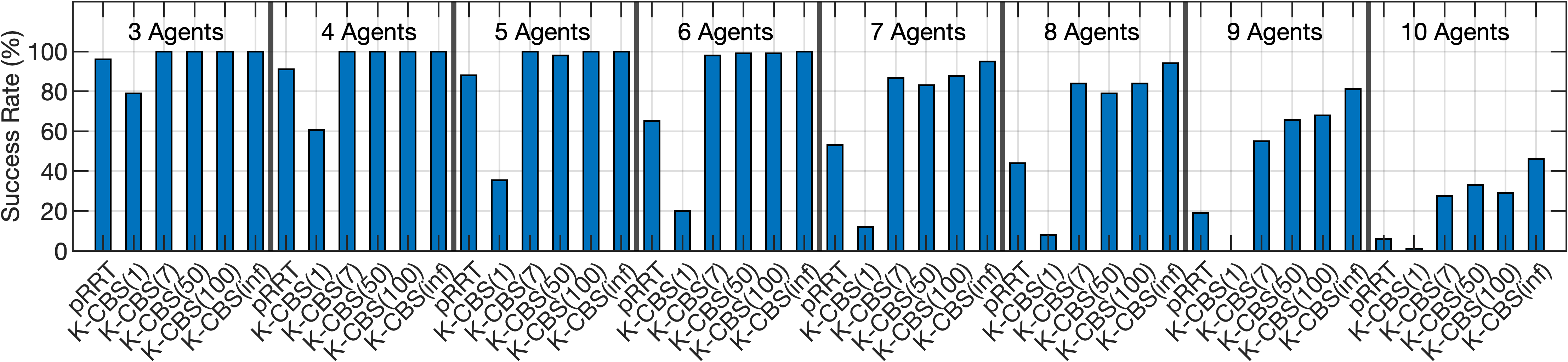}%
        % \caption{Success Rate}%
        % \label{fig:bench_corridor_maze_successRate}%
    % \end{subfigure}\newline
    
    % \begin{subfigure}{\textwidth}
        \includegraphics[width=.85\textwidth]{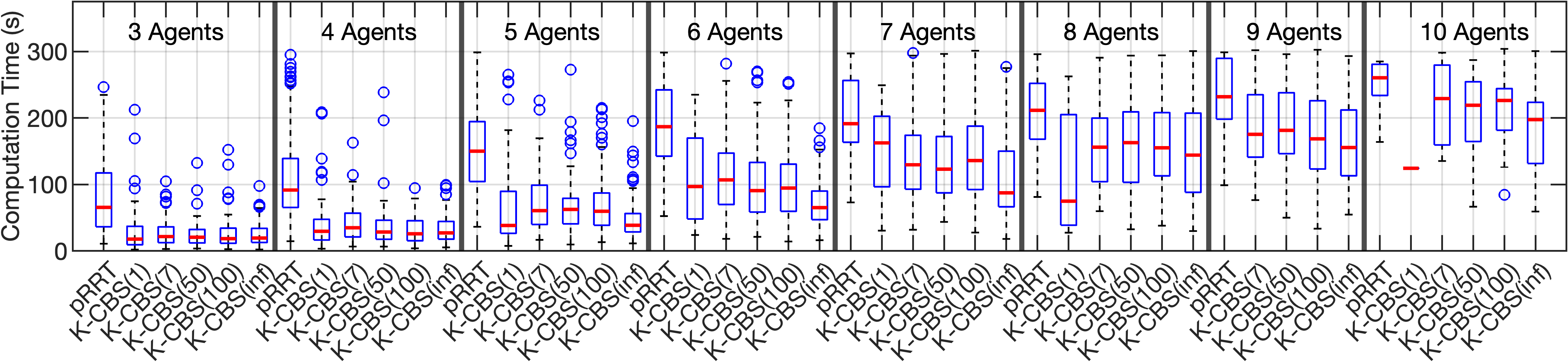}%
        % \caption{Computation Times}%
        % \label{fig:bench_corridor_maze_compTime}%
    % \end{subfigure}\newline%
    % \begin{subfigure}{\textwidth}
    % \centering
    % \includegraphics[width=\columnwidth]{}%
    % \caption{Merge Rates}%
    % \label{fig:bench_corridor_maze_mergeRates}%
    % \end{subfigure}\newline%
    \caption{Benchmark results for the Cluttered Env. in Fig.~\ref{fig:envCluttered}.}
    \label{fig:bench_cluttered}
    \vspace{-1mm}
\end{figure*}

\paragraph*{Open Environment}
In this environment, the robots have to swap positions, i.e., each robot starts in the center of the goal region of another robot (Fig.\ref{fig:envOpen}). We varied the number of robots from 2 to 6.  The results are shown in Fig.~\ref{fig:bench_open}. 
Note that \Alg{cRRT} is only able to find solutions for the two-robot case (10-dimensional composed state space) with a low success rate.  For larger number of robots, its success rate was zero (hence, not shown in Fig.~\ref{fig:bench_open}).  On the other hand \Alg{pRRT}, due to its decentralized nature, performs well in this environment with a success rate of 100\% for all cases.
However, as the number of robots increases, \Alg{pRRT} spends an increasingly large amount of time collision checking against prior robots during planning, resulting in long computation times.  \Alg{K-CBS} does not consider other agents until an entire plan is found. Thus, it scales much better than \Alg{pRRT} with 1--2 orders of magnitude smaller computation times, especially if $B$ is large. 
% generally 10-??? x faster computation times.  
Observe that for $B = 1$, \Alg{K-CBS} merges robots after one conflict, leading to planning in a larger dimensional space.  Clearly, in open environments such a hasty merge is inadvisable.

\paragraph*{Narrow Environment}
In this environment, the narrow corridors are only wide enough for a single robot to traverse at a given time (Fig.~\ref{fig:envNarrow}). Thus, the blue robot must remain out of the corridor until the green agent leaves it, at which point it may travel safely to goal. The benchmark results are shown in Fig.~\ref{fig:bench_corridor}. \Alg{pRRT} struggles in this space because there is no way to consider the blue and green agents simultaneously to ensure success. On the other hand, \Alg{cRRT}, which considers all three agents simultaneously, always fails due to the blow-up in the (15-dimensional) search space.  Conceptually, \Alg{K-CBS} 
%operates somewhere in the middle 
finds a balance in the middle
by merging the blue and green robots but not the red one. 
This leads to a high success rate in this space, 
%This enables it to be very successful in this space, 
particularly for low values of $B$, where the composed agents have the most time to traverse the narrow corridor.

\paragraph*{Cluttered Environment}
Here, we further consider the scalability of \Alg{K-CBS} against \Alg{pRRT} by analyzing it on a larger space with many obstacles and increasing the number of robots from 3 to 10 (Fig.~\ref{fig:envCluttered}). The results are shown in Fig.~\ref{fig:bench_cluttered}. Once again, we see that \Alg{pRRT} spends most of its time collision checking and does not scale past $7$ robots. For \Alg{K-CBS}, higher values of $B$ are more successful in all cases while also generating solutions faster than \Alg{pRRT}. 

\paragraph*{Large Environment}
The large environment, depicted in Fig.~\ref{fig:envLarge}, is used to test the limits of \Alg{K-CBS}. The results are in Table~\ref{tab:benchmarkTable}. \Alg{pRRT} failed on every instance, but \Alg{K-CBS} reliably scales up to $20$ agents. % when not merging at all ($B=\infty$).
% Indeed, in complex examples with open spaces, resolving constraints is less hindering than merging multiple pair of agents.
%it is quicker to resolve many constraints than to merge multiple pairs of agents. 

Overall, our benchmarks of \Alg{K-CBS} for many $B$-values and scenarios suggest that (i) \Alg{K-CBS} is scalable and (ii) merging is most useful in cases where agents must carefully coordinate their movement together in a small space (e.g., corridor).  In more open spaces, large values should be chosen for $B$ to avoid unnecessary merges.
%In general, merging should be avoided. Only in specific corner cases, such as agents bypassing each other in narrow corridors is merging beneficial. 
Still, \Alg{K-CBS} solves very complex MRMP instances reliably and quickly compared to current methods.

\begin{table}[t]
\caption{Benchmark results for \Alg{K-CBS} in variations of the Large env. (Fig.~\ref{fig:envLarge}). \Alg{pRRT} always timed out.}
\centering
\begin{tabular}{| c | c | c | c | c|} 
 \hline
 \# Agents & \# Obstacles &  succ. rate (\%) & ave. comp. time (s) \\ [0.5ex] 
 \hline\hline
  10 & 10 & 92 & 161.7\\ 
 \hline
  15 & 10 & 74 & 226.7\\ 
 \hline
 20 & 10 & 30 & 290.8\\
 \hline
  10 & 5 & 98 & 126.7\\ 
 \hline
  15 & 5 & 92 & 203.1\\ 
 \hline
 20 & 5 & 50 & 279.8 \\
 \hline
  10 & 0 & 100 & 104.8 \\ 
 \hline
  15 & 0 & 94 & 152.6 \\ 
 \hline
 20 & 0 & 74 & 239.8 \\
 \hline
\end{tabular}
\label{tab:benchmarkTable}
\end{table}

% \jk{Hand-design examples added but I am sure if we like showing the full solutions as is. I have also included the solutions explanation and empty spaces with start and goal regions only in case that could look better.}
% \label{subsec:examples}

% \subsection{Benchmark Evaluation}

% \begin{figure}%
% \centering
%     % \begin{subfigure}{0.9\columnwidth}
%     \includegraphics[width=.8\columnwidth]{Figures/corridor_space/success_rate.png}%
    
%     % \caption{Success Rate}%
%     % \label{fig:bench_corridor_space_successRate}%
%     % \end{subfigure}\hfill
%     % \begin{subfigure}{0.45\columnwidth}
%     \includegraphics[width=.8\columnwidth]{Figures/corridor_space/computation_times.png}%
    
%     % \caption{Computation Times}%
%     % \label{fig:bench_corridor_space_compTime}%
%     % \end{subfigure}\newline%
%     % \begin{subfigure}{0.45\columnwidth}
    
%     \includegraphics[width=.8\columnwidth]{Figures/corridor_space/merge_rate.png}%
%     % \caption{Merge Rates}%
%     \label{fig:bench_corridor_space_mergeRates}%
%     % \end{subfigure}\newline%
%     \caption{Benchmarks of corridor space comparing \Alg{cRRT}, \Alg{pRRT}, and \Alg{K-CBS} with varying values of $B$.}
%     \label{fig:bench_corridor}
% \end{figure}

\section{Concluding Remarks}
\label{sec:Conclusion}

In this work, we introduced an MRMP algorithm that is both scalable and probabilitically complete and poses no simplifying assumptions on the problem.  It initially operates as a fully-decentralized method by individually planning for each robot and posing constraints as conflicts between the robots are encountered.  It is equipped with a merging procedure that allows it to merge the robots with entangled paths to a meta-robot.  By doing so, it is able to utilize full-information (centralized) planning for only those robots while still treating others individually, leading to strong performance in various environments and settings.  
One can extend this work in several directions, such as designing heuristics to further improve performance and an adaptive method of choosing a value for the merging parameter.

% \addtolength{\textheight}{-12cm}   % This command serves to balance the column lengths
%                                   % on the last page of the document manually. It shortens
%                                   % the textheight of the last page by a suitable amount.
%                                   % This command does not take effect until the next page
%                                   % so it should come on the page before the last. Make
%                                   % sure that you do not shorten the textheight too much.
% \clearpage{}
\bibliographystyle{IEEEtran}
\bibliography{main}
%%%%%%%%%%%%%%%%%%%%%%%%%%%%%%%%%%%%%%%%%%%%%%%%%%%%%%%%%%%%%%%%%%%%%%%%%%%%%%%%
\end{document}